\documentclass{article}
\usepackage{iclr2026_conference,times}

\usepackage{amsmath,amsfonts,bm}

\def\eqref#1{equation~\ref{#1}}
\def\1{\bm{1}}

\DeclareMathAlphabet{\mathsfit}{\encodingdefault}{\sfdefault}{m}{sl}
\SetMathAlphabet{\mathsfit}{bold}{\encodingdefault}{\sfdefault}{bx}{n}

\usepackage{hyperref}
\usepackage{url}
\usepackage{graphicx}
\usepackage{booktabs}
\usepackage{amsthm}
\usepackage{amssymb}
\usepackage{mathtools}
\usepackage{microtype}
\usepackage{enumitem}
\usepackage{float}

\graphicspath{%
  {figures/}%
  {./figures/}%
  {iclr2026/figures/}%
  {./iclr2026/figures/}%
  {iclr2026/iclr2026/figures/}%
  {./iclr2026/iclr2026/figures/}%
  {../../figures/}%
  {../../code/outputs/exp1_1/results/figures/}%
}

\title{Counterfactual Identifiability Beyond Global Monotonicity: Non-Monotone Triangular Structural Causal Models}

\author{
Pengcheng Tan, Jiang Chen, Dehui Du\thanks{Corresponding author.}\\
Shanghai Key Laboratory of Trustworthy Computing, East China Normal University
}

\newtheorem{definition}{Definition}
\newtheorem{theorem}{Theorem}
\newtheorem{proposition}{Proposition}
\newtheorem{lemma}{Lemma}
\newtheorem{remark}{Remark}

\newcommand{\doop}{\ensuremath{\mathrm{do}}}
\newcommand{\EI}{\ensuremath{\mathrm{EI}}}
\newcommand{\TM}{\ensuremath{\mathrm{TM\mbox{-}SCM}}}
\newcommand{\NM}{\ensuremath{\mathrm{NM\mbox{-}TM\mbox{-}SCM}}}
\newcommand{\CC}{\ensuremath{\mathrm{CC\mbox{-}TSCM}}}

 \iclrpubliccopy
\begin{document}

\maketitle

\begin{abstract}
Structural causal models provide a unified semantics for interventions and counterfactuals, but most identifiability results still rely on global monotonicity or similarly restrictive mechanism classes. These assumptions are often violated in embodied interaction, where the same exogenous perturbation can induce opposite responses under different contact contexts. We ask what structure still suffices once global monotonicity is dropped. We introduce non-monotone triangular structural causal models (\NM), which retain triangular recursion but replace global monotonicity with mechanism-wise invertibility and context-independent inverse transport. The first ensures unique factual abduction under fixed parents; the second prevents observation-equivalent models from relabeling exogenous coordinates across contexts. Within the shared-order triangular class, we prove that these conditions are equivalent to exogenous isomorphism and therefore imply complete counterfactual identifiability, and we give a counterexample showing that local invertibility alone is insufficient. We instantiate the theory in CausalInverter, with triangular invertible layers, orientation gates, and transport-stability regularization. On synthetic mechanisms, including a continuous non-monotonicity bridge, the structural bias yields systematic counterfactual gains as non-monotonicity increases. On MuJoCo Door, our model achieves perfect event-level counterfactual recovery on the formal split, lowers continuous counterfactual angle error relative to a compact Transformer baseline, and in a three-seed head-to-head delivers substantially more stable event recovery than Transformer and conditional-flow predictors. On MuJoCo Push, where non-monotonicity is much weaker, the same low-data state predictors remain competitive or better, consistent with a bias-variance boundary. These results identify a broader identifiable regime between globally monotone triangular models and unconstrained black-box world models.
\end{abstract}

\section{Introduction}

Counterfactual reasoning requires comparing alternative outcomes under the \emph{same factual background}. In embodied systems, this matters for explanation, diagnosis, and safe planning: would a trajectory have changed under a different action, contact angle, or control magnitude?

Structural causal models (SCMs) provide the standard semantics for this type of question \citep{pearl2009causality, shpitser2008complete, peters2017elements}. However, SCM semantics alone do not make counterfactuals identifiable from observational data. The difficulty is one of model-class identifiability: if multiple SCMs induce the same observational distribution, must they also agree on interventions and counterfactuals? Classical positive results answer this question by imposing strong asymmetries on the mechanism class, such as additive noise, post-nonlinear structure, location-scale restrictions, or monotone/invertible triangular parametrizations \citep{hoyer2009nonlinear, zhang2010distinguishing, peters2014causal, immer2023lsnm, khemakhem2021causal, javaloy2023causalnf, chen2025exogenous}.

Many physical systems violate the regularity underlying these results. Contact, friction, and geometry can induce context-dependent sign reversals: the same exogenous perturbation may push the system in opposite directions under different parent states. In such settings, globally monotone model classes are too restrictive, but removing structure entirely restores counterfactual ambiguity. Our question is therefore simple: \emph{if global monotonicity is abandoned, what structure still preserves counterfactual identifiability?}

Our answer is that global monotonicity is not the essential ingredient. What matters is that each local mechanism remains invertible under fixed parents, and that the inverse transport between observation-equivalent models does not vary with the parent context. The first property guarantees unique factual abduction; the second prevents latent relabeling across contexts. Figure~\ref{fig:intro-overview} summarizes this shift from fixed orientation to invariance-based structure.

\begin{figure}[t]
\centering
\includegraphics[width=\linewidth]{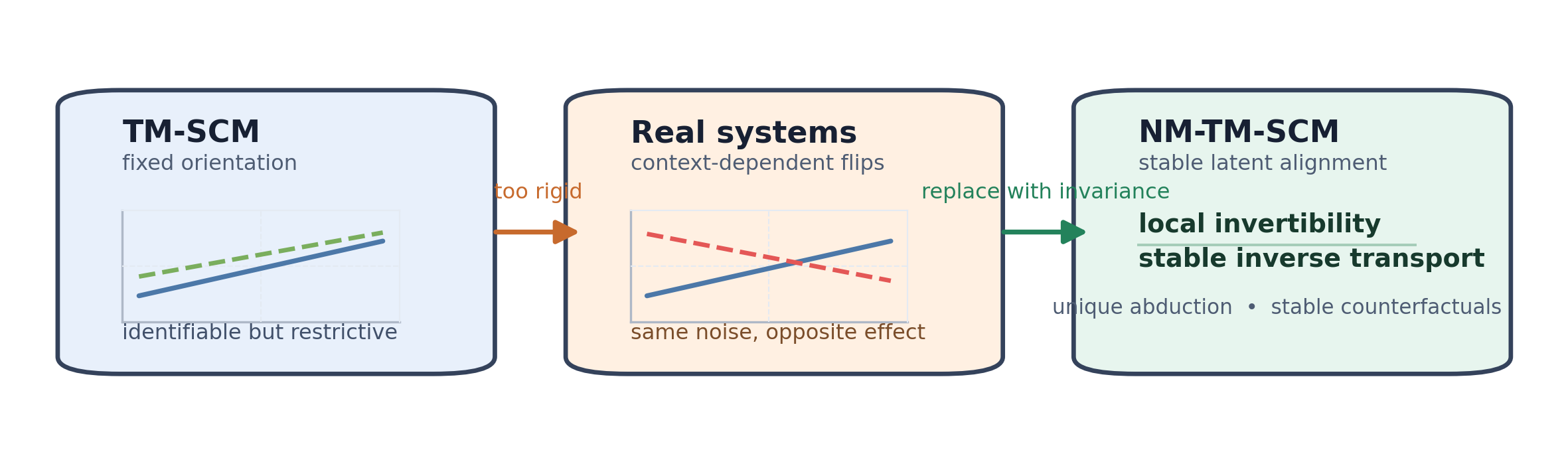}
\caption{From fixed orientation to stable exogenous alignment.}
\label{fig:intro-overview}
\end{figure}

Based on this perspective, we introduce \NM, a strict superset of \TM. The class permits context-dependent orientation flips while preserving stable exogenous alignment. Our theoretical result can be summarized by the chain
\[
\begin{aligned}
\text{mechanism-wise invertibility} \\
\quad + \text{context-independent inverse transport}
&\Longleftrightarrow \text{exogenous isomorphism} \\
&\Longrightarrow \text{complete counterfactual identifiability}.
\end{aligned}
\]

This viewpoint also motivates a structured causal inverter that models local inversion, context-dependent orientation, and stable counterfactual transport.

The main contributions are as follows:
\begin{itemize}[leftmargin=1.2em]
\item \textbf{Theory contribution:} We propose \NM, a non-monotone triangular SCM class, and show that mechanism-wise invertibility together with inverse-transport invariance is equivalent to exogenous isomorphism, which in turn guarantees complete counterfactual identifiability. We also give a constructive counterexample showing that local invertibility alone is insufficient, thereby clarifying the sharpness of the assumption set.
\item \textbf{Method contribution:} We instantiate the theory in CausalInverter, a learnable triangular inverter with context-dependent orientation gates and transport-stability regularization, so that the theoretical conditions become an explicit model design rather than a purely abstract guarantee.
\item \textbf{Experimental contribution:} We validate the method on synthetic mechanisms and MuJoCo interaction tasks. The Door benchmark provides the main physical evidence, while Push serves as a boundary case that makes the method's operating regime explicit and exposes the tradeoff between causal identifiability constraints and task complexity.
\end{itemize}

\section{Setup and Model Class}

\subsection{Triangular recursive SCMs}

Let
\[
M=\langle \mathcal{I}, \Omega_V, \Omega_U, f, P_U\rangle
\]
be a recursive SCM with endogenous variables $V=(V_1,\dots,V_d)$ and exogenous variables $U=(U_1,\dots,U_d)$. We assume a fixed causal order and write the structural equations as
\begin{align}
V_1 &= f_1(U_1), \nonumber\\
V_2 &= f_2(V_1,U_2), \nonumber\\
&\vdots \nonumber\\
V_d &= f_d(V_{1:d-1},U_d). \label{eq:triangular}
\end{align}
This defines a solution map $\Gamma_M : \Omega_U \to \Omega_V$, and the observational distribution is
\[
P_V = (\Gamma_M)_{\sharp} P_U.
\]

We focus on intervention and counterfactual queries of the form
\[
P\!\left(V_T^{\doop(V_A=a)} \in B \mid E=e\right).
\]

\subsection{Exogenous isomorphism}

The recent counterfactual identifiability literature has highlighted \emph{exogenous isomorphism} as the right notion of equivalence for counterfactual semantics \citep{nasr2023bijective, chen2025exogenous, chen2024exomatching, zhou2024domaincf}. Intuitively, two observation-equivalent models should count as counterfactually identical only if they share the same latent worlds up to a coordinate change in exogenous space.

\begin{definition}[Exogenous isomorphism]
Two shared-order triangular SCMs $M$ and $M'$ are exogenously isomorphic, written $M \sim_{\EI} M'$, if there exists a coordinate-wise bijection
\[
\psi = (\psi_1,\dots,\psi_d) : \Omega_U \to \Omega_U'
\]
such that
\[
P_{U'} = \psi_{\sharp} P_U
\]
and, for every $i$ and every parent context $v_{<i}$,
\[
f_i'(v_{<i},\psi_i(u_i)) = f_i(v_{<i},u_i).
\]
\end{definition}

\paragraph{Physical meaning in embodied systems.}
In a physical interaction task, the exogenous coordinates represent latent but fixed properties of a single rollout: small pose perturbations, actuator lag, frictional micro-variation, or hidden contact geometry. Exogenous isomorphism therefore has a concrete interpretation. It says that two observationally equivalent models may use different internal coordinates, but they must still refer to the \emph{same physical latent episode} one-to-one. The same hidden shove, contact offset, or friction realization cannot be relabeled as a different latent world merely because the parent context changes. This is exactly the requirement needed for counterfactual reasoning in robotics: when we ask what would have happened under a different action, we want to hold the latent physical background fixed rather than silently swap it for another one.

\subsection{From context-controlled triangular models to \NM}

We first separate mechanism-level invertibility from global monotonicity.

\begin{definition}[Context-controlled triangular SCM]
A triangular recursive SCM belongs to \CC\ if, for every $i$ and every fixed parent context $v_{<i}$, the section
\[
u_i \mapsto f_i(v_{<i},u_i)
\]
is a bijection. The orientation of this bijection is allowed to depend on $v_{<i}$.
\end{definition}

This class sits strictly between globally monotone triangular models and unconstrained triangular SCMs. For example,
\[
X = U_X, \qquad Y = \operatorname{sgn}(X) U_Y
\]
belongs to \CC\ but not to \TM, because the sign of the mechanism flips with the parent context.

\begin{definition}[\NM]
A shared-order triangular SCM belongs to \NM\ if it satisfies:
\begin{enumerate}[leftmargin=1.4em]
\item \textbf{Triangular recursion:} the SCM has a unique solution map $\Gamma_M$ under a fixed causal order.
\item \textbf{Mechanism-wise invertibility:} for every $i$ and every fixed $v_{<i}$, the map $u_i \mapsto f_i(v_{<i},u_i)$ is a bijection.
\item \textbf{Context-independent inverse transport:} for every observation-equivalent model $M'$ with the same order, there exists a bijection $\psi_i$ such that
\[
(f_i'(v_{<i},\cdot))^{-1} \circ f_i(v_{<i},\cdot) = \psi_i
\]
for all $v_{<i}$.
\end{enumerate}
\end{definition}

\paragraph{Why segmented \TM\ is not a replacement.}
It may seem that a non-monotone process could always be decomposed into monotone phases and then handled by \TM\ piecewise. This does not solve the same problem. A segmented \TM\ formulation requires additional assumptions that are typically stronger than ours: the phase boundary must itself be identifiable from the available causal context; the post-intervention phase-transition rule must be modelled and identifiable; and the exogenous semantics must remain aligned across phases. Otherwise, non-monotonicity is merely shifted into the phase switcher, and the counterfactual path remains ambiguous. In this sense, segmented \TM\ is a special case inside our framework when the switching structure is itself known or identifiable, not a replacement for the problem studied here.

\paragraph{Concrete counterexample to identifiable segmentation.}
Consider a latent stick-slip variable $S=\mathbf{1}[U_S \le \sigma(X)]$, where $X=U_X$ is observed, $U_S \sim \mathrm{Unif}(0,1)$ is unobserved, and $\sigma(\cdot)\in(0,1)$ is a context-dependent contact law. Let
\[
Y=(2S-1)U_Y,
\]
with $U_Y$ supported on positive values. Conditional on a fixed $S$, the mechanism is monotone in $U_Y$, so a segmented \TM\ view is possible only if the phase label is known. But here the phase boundary is itself hidden inside $U_S$ and changes under intervention through $\sigma(X)$. After observing $(X,Y)$, one does not recover a context-only segmentation rule; after intervening on $X$, one must also know how the latent phase would have switched. In embodied terms, this is the familiar case where sticking versus slipping depends on unobserved contact micro-state rather than on the visible pose alone. A segmented \TM\ model can represent this process only by assuming oracle access to the hidden phase or to its post-intervention transition rule, which is precisely the extra burden that our formulation avoids.

\section{Theory}

Our theoretical program has three steps: mechanism-wise invertibility induces a global exogenous code, context-independent inverse transport is equivalent to exogenous isomorphism, and exogenous isomorphism yields complete counterfactual identifiability. Relative to recent results on bijective SCMs and exogenous alignment \citep{nasr2023bijective, chen2025exogenous}, our goal is to isolate exactly which part of global monotonicity is structurally necessary once context-dependent orientation flips are allowed.

\begin{lemma}[Global bijection from local invertibility]
\label{lem:global-bijection}
If a triangular recursive SCM satisfies mechanism-wise invertibility, then its solution map $\Gamma_M : \Omega_U \to \Omega_V$ is a bijection.
\end{lemma}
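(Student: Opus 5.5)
The plan is to prove bijectivity of $\Gamma_M$ directly, by building an explicit inverse map $\Gamma_M^{-1}:\Omega_V\to\Omega_U$ through forward substitution along the causal order. I will first make the codomain precise. I take $\Omega_V$ to be the set of values the recursion can actually produce, meaning $v_1$ lies in the range of $f_1$ and each $v_i$ lies in the range of $f_i(v_{<i},\cdot)$. Mechanism-wise invertibility makes each section a bijection onto its stated codomain, so this convention is consistent. Surjectivity then has to be read with respect to this product-of-sections codomain. Pinning this down is the only real subtlety I expect, since otherwise a value of $v_i$ outside the image of a given section would have no preimage.

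For fixed $u=(u_1,\dots,u_d)$, triangular recursion defines $\Gamma_M(u)$ coordinatewise. I set $v_1=f_1(u_1)$ and then $v_i=f_i(v_{<i},u_i)$ for $i=2,\dots,d$. This map is well-defined by item 1 of the \NM\ definition, or simply by the triangular form~(\ref{eq:triangular}).

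\textbf{Injectivity.} Suppose $\Gamma_M(u)=\Gamma_M(u')=v$. I will show by induction on $i$ that $u_i=u'_i$. Both $u_i$ and $u'_i$ satisfy $f_i(v_{<i},u_i)=v_i=f_i(v_{<i},u'_i)$ with the same parent context $v_{<i}$, because the full output $v$ is shared. Since $f_i(v_{<i},\cdot)$ is injective, $u_i=u'_i$. Note that the argument only ever compares sections at the same observed context. This is why context-dependent orientation flips do no harm, and no monotonicity is needed.

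\textbf{Surjectivity and explicit inverse.} Given $v\in\Omega_V$, I define $u_i=(f_i(v_{<i},\cdot))^{-1}(v_i)$ for each $i$. This is possible because $v_i$ lies in the range of the $i$-th section. I then verify by induction that $\Gamma_M(u)$ reproduces $v$. At step $i$ the recursion uses the parents $v_{<i}$, which have already been reproduced, and $f_i(v_{<i},u_i)=v_i$ by construction. The same formula gives the inverse $\Gamma_M^{-1}(v)_i=(f_i(v_{<i},\cdot))^{-1}(v_i)$, which is the global exogenous code used later. If measurability is wanted, it follows when the section inverses are jointly measurable in $(v_{<i},v_i)$; I will note this as an assumption rather than prove it.
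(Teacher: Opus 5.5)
Your proof is correct and follows the paper's own argument. Injectivity comes from comparing the $i$-th sections at the common parent context $v_{<i}$ along the causal order, and surjectivity from the recursive inversion $u_i = (f_i(v_{<i},\cdot))^{-1}(v_i)$; your codomain convention is a harmless extra precaution, since when each section is a bijection onto the $i$-th coordinate space (as the paper implicitly assumes) your set of producible values is all of $\Omega_V$.
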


\noindent\textbf{Proof sketch.}
Injectivity follows by induction along the causal order: if $\Gamma_M(u)=\Gamma_M(\tilde u)$, then $u_1=\tilde u_1$, and the same argument applies recursively because each local section is bijective under the same parent context. Surjectivity follows by recursive inversion of the observed coordinates. \qed

\begin{theorem}[Invariant inverse transport iff exogenous isomorphism]
\label{thm:transport-ei}
Let $M$ and $M'$ be shared-order triangular SCMs with
\[
P_V^M = P_V^{M'}.
\]
If both satisfy mechanism-wise invertibility, then the following are equivalent:
\begin{enumerate}[leftmargin=1.4em]
\item for every mechanism $i$, the cross-model inverse transport
\[
(f_i'(v_{<i},\cdot))^{-1} \circ f_i(v_{<i},\cdot)
\]
is independent of $v_{<i}$;
\item $M \sim_{\EI} M'$.
\end{enumerate}
\end{theorem}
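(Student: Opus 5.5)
We prove the two implications separately. The direction (2)$\Rightarrow$(1) is essentially algebraic; the direction (1)$\Rightarrow$(2) needs an extra step to recover the distributional condition $P_{U'}=\psi_\sharp P_U$ from $P_V^M=P_V^{M'}$.

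\emph{(2)$\Rightarrow$(1).} Assume $M\sim_{\EI}M'$ via the coordinate-wise bijection $\psi=(\psi_1,\dots,\psi_d)$. By definition, $f_i'(v_{<i},\psi_i(u_i))=f_i(v_{<i},u_i)$ for every $v_{<i}$ and $u_i$. Mechanism-wise invertibility of $M'$ lets us apply $(f_i'(v_{<i},\cdot))^{-1}$ to both sides, which gives
\[
(f_i'(v_{<i},\cdot))^{-1}\circ f_i(v_{<i},\cdot)=\psi_i .
\]
The right-hand side does not involve $v_{<i}$, so (1) holds.

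\emph{(1)$\Rightarrow$(2).} Set $\psi_i:=(f_i'(v_{<i},\cdot))^{-1}\circ f_i(v_{<i},\cdot)$, which is well defined by (1). Each $\psi_i$ is a composition of two bijections, so it is a bijection $\Omega_{U_i}\to\Omega_{U_i'}$. Composing with $f_i'(v_{<i},\cdot)$ gives the structural identity $f_i'(v_{<i},\psi_i(u_i))=f_i(v_{<i},u_i)$ for all contexts.

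It remains to show $P_{U'}=\psi_\sharp P_U$.
\begin{enumerate}[leftmargin=1.4em]
\item First we show $\Gamma_{M'}\circ\psi=\Gamma_M$ by induction on the causal order. Write $v=\Gamma_M(u)$. For the first coordinate, $f_1'(\psi_1(u_1))=f_1(u_1)=v_1$. If the first $i-1$ coordinates of $\Gamma_{M'}(\psi(u))$ equal $v_{<i}$, then the $i$-th coordinate is $f_i'(v_{<i},\psi_i(u_i))=f_i(v_{<i},u_i)=v_i$.
\item By Lemma~\ref{lem:global-bijection}, $\Gamma_{M'}$ is a bijection, so $\psi=\Gamma_{M'}^{-1}\circ\Gamma_M$.
\item Therefore
\[
\psi_\sharp P_U=(\Gamma_{M'}^{-1})_\sharp(\Gamma_M)_\sharp P_U=(\Gamma_{M'}^{-1})_\sharp P_V^{M}=(\Gamma_{M'}^{-1})_\sharp P_V^{M'}=P_{U'},
\]
where the third equality uses observational equivalence and the last uses $P_V^{M'}=(\Gamma_{M'})_\sharp P_{U'}$.
\end{enumerate}

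The main obstacle is measure-theoretic. The pushforward step needs $\Gamma_{M'}^{-1}$, and hence $\psi$, to be measurable. We would handle this either by assuming the mechanisms and their sectionwise inverses are jointly measurable (standard Borel spaces make bijective measurable maps bimeasurable), or by reading "bijection" in the paper's definitions as "measurable bijection with measurable inverse." A secondary subtlety is that (1) is only needed on contexts $v_{<i}$ in the support. The paper's definition of exogenous isomorphism quantifies over all contexts, however, so we take (1) literally for every $v_{<i}$, and no support argument is needed.
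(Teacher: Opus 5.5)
Your proposal is correct and follows the paper's proof essentially step for step. For (2)$\Rightarrow$(1) you left-compose the exogenous-isomorphism identity with $(f_i'(v_{<i},\cdot))^{-1}$; for (1)$\Rightarrow$(2) you define $\psi_i$, prove $\Gamma_{M'}\circ\psi=\Gamma_M$ by induction, and combine bijectivity of $\Gamma_{M'}$ (Lemma~\ref{lem:global-bijection}) with $P_V^M=P_V^{M'}$ to get $P_{U'}=\psi_\sharp P_U$, where your pushforward through $\Gamma_{M'}^{-1}$ is the same move as the paper's appeal to injectivity of $(\Gamma_{M'})_\sharp$. Your measurability caveat is legitimate, and the paper leaves it implicit too, since injectivity of $(\Gamma_{M'})_\sharp$ also requires $\Gamma_{M'}$ to be bimeasurable.
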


\noindent\textbf{Proof sketch.}
By Lemma~\ref{lem:global-bijection}, both solution maps are invertible. If the inverse transports are context-independent, they define coordinate-wise bijections $\psi_i$. Composing these along the triangular order yields
\[
\Gamma_{M'} \circ \psi = \Gamma_M,
\]
which implies both mechanism-level compatibility and $P_{U'}=\psi_{\sharp}P_U$. The converse follows immediately by left-composing the exogenous-isomorphism identity with $(f_i'(v_{<i},\cdot))^{-1}$. \qed

\begin{remark}[Why inverse transport invariance is essentially sharp]
\label{rem:sharpness}
Theorem~\ref{thm:transport-ei} should be read as a sharp separation result inside the shared-order triangular class. Once mechanism-wise invertibility is fixed, the remaining obstruction to exogenous isomorphism is exactly the context dependence of cross-model inverse transport. If that dependence disappears, exogenous alignment follows automatically. If it is allowed, Proposition~\ref{prop:counterexample} shows that observation-equivalent models can already disagree on counterfactuals. In this precise sense, inverse transport invariance is not merely a convenient sufficient assumption: it is the smallest \emph{context-free mechanism-level} condition in our analysis that rules out context-dependent relabeling of latent physical worlds.
\end{remark}

\begin{theorem}[Exogenous isomorphism implies complete counterfactual identifiability]
\label{thm:ei-cf}
If $M \sim_{\EI} M'$, then for any intervention set $A$, intervention value $a$, evidence event $E=e$, target set $T$, and measurable set $B$,
\[
P_M\!\left(V_T^{\doop(V_A=a)} \in B \mid E=e\right)
=
P_{M'}\!\left(V_T^{\doop(V_A=a)} \in B \mid E=e\right).
\]
\end{theorem}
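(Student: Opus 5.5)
The plan is to push every counterfactual query through the exogenous coordinate change $\psi$ and show that both sides become the same integral over $P_U$. The argument has three steps.

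\textbf{Step 1: the intervened solution maps commute with $\psi$.} Fix an intervention $\doop(V_A=a)$. For $M$, write $\Gamma_M^{a}:\Omega_U\to\Omega_V$ for the intervened solution map: coordinates in $A$ are set to $a$, and every other coordinate is computed recursively as $v_i=f_i(v_{<i},u_i)$. I will prove
\[
\Gamma_{M'}^{a}\circ\psi=\Gamma_M^{a}
\]
by induction along the shared causal order. Take $u\in\Omega_U$ and suppose both maps agree on the first $i-1$ coordinates, giving a common prefix $v_{<i}$. If $i\in A$, both maps output $a_i$. Otherwise, the defining identity of $\sim_{\EI}$ gives
\[
f_i'(v_{<i},\psi_i(u_i))=f_i(v_{<i},u_i).
\]
This identity holds for \emph{every} parent context, including intervened ones that may lie off the observational support. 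Taking $A=\emptyset$ recovers $\Gamma_{M'}\circ\psi=\Gamma_M$ for the factual world as a special case.

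\textbf{Step 2: write the joint factual/counterfactual law.} In the standard twin-world semantics, the query is computed from the law of the pair $(\Gamma_M(U),\Gamma_M^{a}(U))$ with $U\sim P_U$. Set $W=\psi(U)$. Then $W\sim P_{U'}$ because $P_{U'}=\psi_\sharp P_U$, and by Step 1,
\[
(\Gamma_{M'}(W),\Gamma_{M'}^{a}(W))=(\Gamma_M(U),\Gamma_M^{a}(U)).
\]
So the joint laws of factual and counterfactual variables coincide under $M$ and $M'$. The counterfactual probability is a functional of this joint law alone: condition on the factual event $E=e$ and evaluate the event $\{V_T^{a}\in B\}$. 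Hence the two probabilities are equal.

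\textbf{Step 3: make the conditioning well defined.} This is the step I expect to be the main obstacle. When $P(E=e)>0$, the conditional probability is a ratio of joint probabilities and the equality is immediate. When the evidence has probability zero, as for continuous evidence, conditioning is defined only through a regular conditional distribution, which is unique only $P_V$-almost everywhere. I would handle this in two ways.
\begin{itemize}
\item First, the two joint laws are identical, so any version of the disintegration for $M$ is also a version for $M'$. The identity then holds for $P_V$-almost every $e$, which is the natural meaning of the statement.
\item Second, when $E$ is the full factual vector $V$, Lemma~\ref{lem:global-bijection} and mechanism-wise invertibility give a stronger pointwise result. Abduction is deterministic, $u=\Gamma_M^{-1}(e)$, and Step 1 gives $\Gamma_{M'}^{-1}(e)=\psi(\Gamma_M^{-1}(e))$. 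Hence $\Gamma_{M'}^{a}(\Gamma_{M'}^{-1}(e))=\Gamma_M^{a}(\Gamma_M^{-1}(e))$ for every $e$.
\end{itemize}
For partial evidence $E=V_S$, I would use the joint-law argument and state the conclusion up to null sets. Measurability of $\psi$ and $\psi^{-1}$ is needed for the pushforwards to make sense; I take it as implicit in the definition of $\sim_{\EI}$.
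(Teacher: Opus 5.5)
Your proposal is correct and follows essentially the same route as the paper. Both arguments run an induction along the causal order to show $\Gamma^{a}_{M'}\circ\psi=\Gamma^{a}_{M}$ (with $A=\emptyset$ giving the factual case), then transport $P_U$ to $P_{U'}$ via $\psi$ so that evidence sets and posteriors correspond. Your version is also more careful in two places where the paper's Step 3 simply asserts that the posteriors are pushforward-equivalent: you package Steps 2--3 as equality of the joint factual/counterfactual law, and you handle probability-zero evidence explicitly, via regular conditional distributions in general and deterministic abduction when $E=V$ under mechanism-wise invertibility.
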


\noindent\textbf{Proof sketch.}
The exogenous isomorphism $\psi$ establishes a pointwise correspondence between latent worlds. Under the same intervention, the triangular recursion produces the same potential response at matched exogenous coordinates. Since $P_{U'}=\psi_{\sharp}P_U$ and the factual evidence also satisfies $\Gamma_{M'} \circ \psi = \Gamma_M$, the entire abduction--action--prediction pipeline is preserved. \qed

\begin{proposition}[Local invertibility alone is insufficient]
\label{prop:counterexample}
There exist observation-equivalent triangular SCMs with mechanism-wise invertibility but context-dependent inverse transport that disagree on counterfactuals.
\end{proposition}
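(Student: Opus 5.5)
We prove the proposition with an explicit two-variable construction built on the sign-flip example from the definition of \CC. Take $d=2$, with $U_X,U_Y$ independent and $U_Y\sim N(0,1)$ (any symmetric, nondegenerate law works). Let $M$ be
\[
X=U_X,\qquad Y=U_Y,
\]
with $U_X\sim N(0,1)$. Let $M'$ be
\[
X=U_X,\qquad Y=\operatorname{sgn}(X)\,U_Y,
\]
with $\operatorname{sgn}(0):=1$ and the same exogenous law. Both models are triangular. For each fixed parent value $x$, both $Y$-sections ($u_Y\mapsto u_Y$ and $u_Y\mapsto \pm u_Y$) are bijections of $\mathbb{R}$, so mechanism-wise invertibility holds.

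The first step is observational equivalence. In $M'$, conditional on $X=x$, we have $Y=\pm U_Y$ with $U_Y$ independent of $X$. Since $U_Y$ is symmetric, $Y\mid X=x\sim N(0,1)$ for every $x$. Hence $P^{M}_{XY}=P^{M'}_{XY}=N(0,I_2)$.

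The second step computes the cross-model inverse transport for mechanism $2$:
\[
(f_2'(x,\cdot))^{-1}\circ f_2(x,\cdot)=\operatorname{sgn}(x)\cdot\mathrm{id}.
\]
This equals $\mathrm{id}$ for $x>0$ and $-\mathrm{id}$ for $x<0$, so it depends on the context. Condition 1 of Theorem~\ref{thm:transport-ei} therefore fails. This is consistent with that theorem, but we do not need to invoke it.

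The third step exhibits a counterfactual disagreement. Take evidence $E=(X,Y)=(x,y)$ with $x>0$, intervention $\doop(X=x^*)$ with $x^*<0$, and target $T=\{Y\}$.

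In $M$, abduction gives $u_Y=y$, so $Y^{\doop(X=x^*)}=y$.

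In $M'$, Lemma~\ref{lem:global-bijection} gives unique abduction $u_Y=\operatorname{sgn}(x)y=y$. Prediction under the intervention then gives $Y^{\doop(X=x^*)}=\operatorname{sgn}(x^*)\,y=-y$.

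So for $y\neq 0$ the counterfactual law is $\delta_y$ in one model and $\delta_{-y}$ in the other. With $B=(0,\infty)$ and $y>0$, the two counterfactual probabilities are $1$ and $0$.

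The main obstacle is that the evidence $\{X=x,Y=y\}$ has probability zero, so conditioning needs care. We handle this by conditioning on the positive-probability event $E=\{X>0,\,Y>0\}$, intervening with $\doop(X=-1)$, and using $B=(0,\infty)$.

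In $M$, $E$ forces $U_Y>0$, so $Y^{\doop(X=-1)}=U_Y>0$ and the probability is $1$.

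In $M'$, $E$ also forces $U_Y>0$ (since $Y=U_Y$ when $X>0$), so $Y^{\doop(X=-1)}=-U_Y<0$ and the probability is $0$.

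This gives a clean disagreement without any regular conditional probabilities.
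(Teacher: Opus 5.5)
Your proposal is correct and is essentially the paper's proof: it uses the same sign-flip counterexample with the roles of $M$ and $M'$ swapped, the same symmetry argument for observational equivalence, the same context-dependent transport $\operatorname{sgn}(x)\cdot\mathrm{id}$, and the same $y\mapsto -y$ counterfactual disagreement under $X\leftarrow -1$. Your two refinements tighten points the paper's pointwise argument leaves implicit: setting $\operatorname{sgn}(0):=1$ keeps the section at $x=0$ bijective (the usual convention gives $f(0,\cdot)\equiv 0$, which breaks mechanism-wise invertibility), and conditioning on the positive-probability event $\{X>0,Y>0\}$ avoids conditioning on the null event $(X=1,Y=y)$.
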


A minimal example is
\[
M: \quad X=U_X,\qquad Y=\operatorname{sgn}(X)U_Y
\]
versus
\[
M': \quad X=U_X',\qquad Y=U_Y',
\]
with independent standard Gaussian exogenous variables. They induce the same observational distribution, but for the factual sample $(X=1,Y=y)$, the counterfactual under $X \leftarrow -1$ is $-y$ in $M$ and $y$ in $M'$. The failure comes exactly from the fact that
\[
(f'(x,\cdot))^{-1}\circ f(x,\cdot)=\operatorname{sgn}(x)\cdot(\cdot)
\]
depends on the context.

Together, Theorems~\ref{thm:transport-ei} and~\ref{thm:ei-cf} show that \NM\ preserves the full counterfactual semantics usually attributed to stronger globally monotone classes, while covering mechanisms that \TM\ cannot represent.

\section{CausalInverter}

The theory suggests that learning should target three properties simultaneously: accurate observational modelling, stable inversion of factual samples into exogenous coordinates, and stable transport of those coordinates under counterfactual rollout. We encode these ideas in CausalInverter.

\subsection{Structured parametrization}

For each mechanism, we use the parametrization
\[
v_i = f_{i,\theta}(v_{<i},u_i)
= m_{i,\theta}(v_{<i}) +
s_{i,\theta}(v_{<i}) \cdot q_{i,\theta}(u_i;v_{<i}),
\]
where $m_{i,\theta}$ is a context-dependent shift, $q_{i,\theta}(\cdot;v_{<i})$ is an invertible monotone scalar flow, and $s_{i,\theta}(v_{<i}) \in \{-1,+1\}$ is a context-dependent orientation gate. If $s_{i,\theta}$ is constant across contexts, the model collapses to a monotone triangular form; if it changes with context, the model can express parent-dependent sign reversals.

\subsection{Training objective}

Given the global solution map $\Gamma_{\theta}$ and its inverse, we optimize the observational negative log-likelihood
\[
\mathcal{L}_{\mathrm{nll}}
=
-\frac{1}{N}\sum_{n=1}^N
\log p_U(\Gamma_\theta^{-1}(v^{(n)}))
-\frac{1}{N}\sum_{n=1}^N
\log \left|\det J_{\Gamma_\theta^{-1}}(v^{(n)})\right|,
\]
augmented with two stabilizers:
\begin{align}
\mathcal{L}_{\mathrm{cyc}}
&=
\mathbb{E}_{v \sim P_{\mathrm{data}}}
\|\Gamma_\theta(\Gamma_\theta^{-1}(v)) - v\|_2^2 \nonumber\\
&\quad+
\mathbb{E}_{u \sim p_U}
\|\Gamma_\theta^{-1}(\Gamma_\theta(u)) - u\|_2^2, \\
\mathcal{L}_{\mathrm{tr}}
&=
\sum_{i=1}^d
\mathbb{E}_{c,\tilde c,z}
\Big(\|\partial_z K_{i,\theta}(z;c,\tilde c)\|_2 - \bar\kappa_i(c,\tilde c)\Big)^2,
\end{align}
where $K_{i,\theta}$ is the learned context transport between two parent contexts. We also regularize the relaxed gate field $\tilde s_{i,\theta}$ to avoid high-frequency orientation jitter. The full objective is
\[
\mathcal{L}
=
\mathcal{L}_{\mathrm{nll}}
+ \lambda_{\mathrm{cyc}} \mathcal{L}_{\mathrm{cyc}}
+ \lambda_{\mathrm{tr}} \mathcal{L}_{\mathrm{tr}}
+ \lambda_{\mathrm{ori}} \mathcal{L}_{\mathrm{ori}}.
\]

\subsection{Counterfactual inference}

For a factual sample $v$, we first recover the exogenous code
\[
u^\star = \Gamma_\theta^{-1}(v),
\]
then execute the intervention in the structural equations and roll out the post-intervention response recursively while keeping $u^\star$ fixed. This directly mirrors the abduction--action--prediction semantics, but without requiring a globally fixed monotone orientation.

\section{Experiments}

Our experiments ask whether the proposed structural constraints improve \emph{individual-level} counterfactual recovery. We use synthetic mechanisms to isolate the effect and MuJoCo interaction tasks to test whether it survives in physical systems, reporting both geometric rollout errors and event-level counterfactual metrics \citep{cadei2024smoke}.

\subsection{Synthetic mechanisms}

We evaluate three mechanism families: globally monotone, threshold-flip, and smooth-flip. The benchmark covers $108$ configurations in total: three mechanism families, four exogenous noise families, three dimensions $d \in \{3,5,10\}$, and three sample sizes $N \in \{2\times 10^3,10^4,5\times 10^4\}$. We report counterfactual MSE, latent recovery error, and direction accuracy.

Table~\ref{tab:synthetic-main} shows the core pattern. When the mechanism is globally monotone, all structured baselines are competitive. Once parent-dependent orientation flips appear, stronger generative flexibility alone does not solve the problem: ANM, monotone triangular transport, and contextual bijective baselines all remain around the same counterfactual error level, whereas CausalInverter sharply reduces the error while recovering the orientation field with accuracy around $0.97$. This improvement is not only visible in the means. Using paired Wilcoxon signed-rank tests over the matched synthetic configurations within each mechanism family, we find that on both threshold-flip and smooth-flip mechanisms, our CF-MSE is significantly lower than both TM-SCM Quantile and BSCM-Flow Contextual (all $p<10^{-10}$), with bootstrap confidence intervals for the mean difference staying strictly below zero; see Appendix Table~\ref{tab:appendix-synth-significance}.

We also run a continuous bridge sweep that varies non-monotonicity strength rather than mechanism identity. Its calibrated synthetic score rises from $0.00$ to $0.98$, while the mean CF-MSE gain of our model over TM-SCM grows from $0.0008$ to $0.8409$; across runs, this gain tracks the score with slope $1.017$ and Spearman $\rho=0.821$ ($p=2.8\times 10^{-40}$). This directly supports the later bias-variance reading: monotone transport remains competitive when non-monotonicity is weak, but inverse-transport alignment becomes increasingly necessary once branch-sensitive reversals become common; see Appendix Figure~\ref{fig:appendix-synth-bridge} and Table~\ref{tab:appendix-synth-bridge}.

\begin{table}[t]
\caption{Synthetic benchmark summary.}
\label{tab:synthetic-main}
\centering
\small
\begin{tabular}{lccccc}
\toprule
Mechanism & ANM & TM-SCM & BSCM-Flow & Ours & Ours Dir. Acc. \\
\midrule
Global monotone & 0.0071 & 0.0017 & 0.0036 & \textbf{0.0000} & 1.0000 \\
Threshold flip & 0.7288 & 0.7286 & 0.7297 & \textbf{0.1611} & 0.9761 \\
Smooth flip & 0.5934 & 0.5929 & 0.5941 & \textbf{0.0961} & 0.9745 \\
\bottomrule
\end{tabular}
\end{table}

\begin{figure}[t]
\centering
\includegraphics[width=\linewidth]{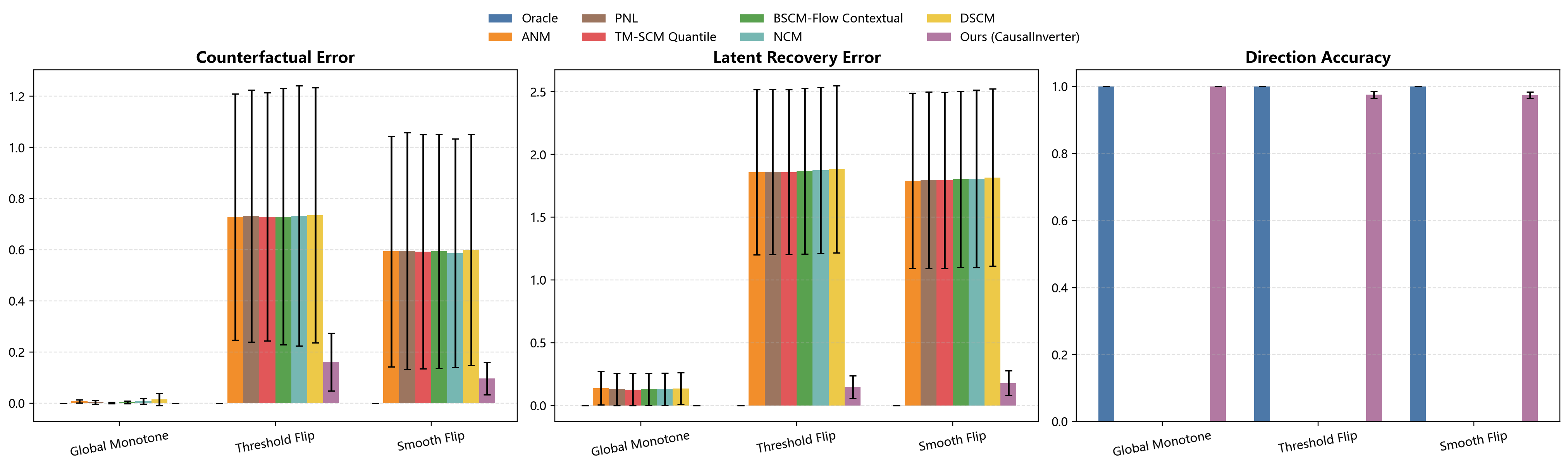}
\caption{Synthetic main panel.}
\label{fig:synthetic-main}
\end{figure}

\subsection{Physical interaction tasks}

We evaluate two low-dimensional MuJoCo tasks with contact-dependent dynamics: Push and Door. Both use the same formal \texttt{balanced\_pool\_v2} counterfactual sampler. For each factual trajectory, we first generate a candidate pool of interventions and then select at most one query while balancing success-change and no-change cases, and covering the transition types $S\!\to\!S$, $S\!\to\!F$, $F\!\to\!S$, and $F\!\to\!F$ as much as the task permits.

\paragraph{Baseline scope.}
The physical comparison is intentionally state-based. We compare against aligned non-oracle baselines that use the same observed state, the same rollout horizon, and the same counterfactual protocol: a linear dynamics model, a mode-conditioned context model, an MLP world model, an autoregressive dynamics model, a GRU sequence model, a compact Transformer sequence model, and a conditional-flow dynamics model, plus an oracle environment upper bound. The main table emphasizes representative stronger predictors and keeps the context model in the appendix-level diagnostics and ablations for space. This is a controlled structural comparison against stronger low-data state predictors, not a full benchmark against image-conditioned world models \citep{hafner2021dreamerv2, zhang2024pivotr}. We also do not include return-conditioned policy sequence models, since the evaluation target here is state-conditioned counterfactual rollout rather than action generation.

\paragraph{A composite non-monotonicity score.}
To avoid over-interpreting a single flip-rate number, we summarize each task by
\[
\mathsf{NMS} \triangleq \rho_{\mathrm{flip}}\cdot \rho_{\mathrm{contact}},
\]
where $\rho_{\mathrm{flip}}$ is the empirical Context Flip Ratio and $\rho_{\mathrm{contact}}$ is the contact ratio. This exposition metric captures both how often the local orientation changes and how concentrated those changes are in the interaction phase. Using coarse bins, we regard $\mathsf{NMS}<0.25$ as weak, $0.25\le\mathsf{NMS}<0.5$ as moderate, and $\mathsf{NMS}\ge 0.5$ as strong. Under the formal test splits, Push has $\mathsf{NMS}\approx 0.175$ and Door has $\mathsf{NMS}\approx 0.696$, so Door remains the main physical evidence while Push functions as a boundary case.

\begin{table}[t]
\caption{Main MuJoCo results under the balanced counterfactual protocol. We report representative low-data state baselines spanning linear, feedforward, autoregressive, recurrent, Transformer, and flow families; the oracle environment upper bound is perfect in both tasks. `CF Success Acc.` denotes accuracy on the counterfactual success/failure label, not the raw fraction of successful openings.}
\label{tab:physical-main}
\centering
\scriptsize
\begin{tabular}{llccc}
\toprule
Task & Model & CF MSE & CF Success Acc. & CF Success Change Acc. \\
\midrule
Push & Linear Dynamics & \textbf{0.0017} & \textbf{0.7500} & \textbf{0.7500} \\
Push & MLP World Model & 0.0071 & \textbf{0.7500} & \textbf{0.7500} \\
Push & Autoreg. Dyn. & 0.0029 & 0.5312 & 0.5312 \\
Push & GRU Seq. Model & 0.0029 & 0.5000 & 0.5000 \\
Push & Transformer Seq. & 0.0021 & 0.6562 & 0.6562 \\
Push & Cond. Flow Dyn. & 0.0030 & 0.6562 & 0.6562 \\
Push & Ours & 0.0027 & 0.5625 & 0.5625 \\
\midrule
Door & Linear Dynamics & 0.2276 & 0.8750 & 0.8750 \\
Door & MLP World Model & 0.1369 & \textbf{1.0000} & \textbf{1.0000} \\
Door & Autoreg. Dyn. & 0.2219 & 0.9062 & 0.9062 \\
Door & GRU Seq. Model & 0.5911 & 0.3125 & 0.3125 \\
Door & Transformer Seq. & 0.1918 & \textbf{1.0000} & \textbf{1.0000} \\
Door & Cond. Flow Dyn. & \textbf{0.0596} & 0.8438 & 0.8438 \\
Door & Ours & 0.1082 & \textbf{1.0000} & \textbf{1.0000} \\
\bottomrule
\end{tabular}
\end{table}

Table~\ref{tab:physical-main} sharpens the boundary-aware empirical picture. On Push, once stronger state-based baselines are included, our method is no longer dominant: Linear Dynamics and the MLP world model both reach CF success accuracy $0.75$, the conditional-flow and Transformer baselines reach $0.6562$, and the compact Transformer attains the lowest CF box-MSE at $0.0021$. Our model remains competitive on continuous geometry, but does not win either the event metric or the scalar counterfactual error. On Door, the stronger evidence is branch-sensitive rather than uniformly metric-dominant. On the formal split, CausalInverter, the MLP world model, and the compact Transformer all attain perfect event accuracy, but our model has the lowest continuous counterfactual angle error among those perfect-event models ($0.1082$ versus $0.1369$ for MLP and $0.1918$ for Transformer). The conditional-flow baseline attains the lowest scalar CF Door-MSE overall ($0.0596$) yet drops to event accuracy $0.8438$, so its local geometric fit does not reliably preserve the correct counterfactual branch. Paired tests reinforce this reading: against Transformer, our CF Door-MSE is significantly lower (mean difference $-0.0836$, Wilcoxon $p=3.39\times 10^{-2}$), while against conditional flow our event accuracy is significantly higher (McNemar $p=3.12\times 10^{-2}$); see Appendix Table~\ref{tab:appendix-physical-significance}. The event metric itself is also interpretable: the formal Door split contains only $10/32$ true-success counterfactuals and $22/32$ true-failure ones, so a model can score $1.0$ by staying on the correct side of the threshold even with noticeable continuous error. Appendix Figure~\ref{fig:appendix-door-threshold} shows that Transformer succeeds with only a $0.0048$ minimum positive margin on success queries, whereas CausalInverter keeps a larger $0.0300$ buffer. The substantive claim is therefore narrower but stronger: in the strong-\(\mathsf{NMS}\) Door regime, the structured inverse most reliably preserves the event-level branch while maintaining competitive continuous accuracy.

\begin{figure}[t]
\centering
\includegraphics[width=\linewidth]{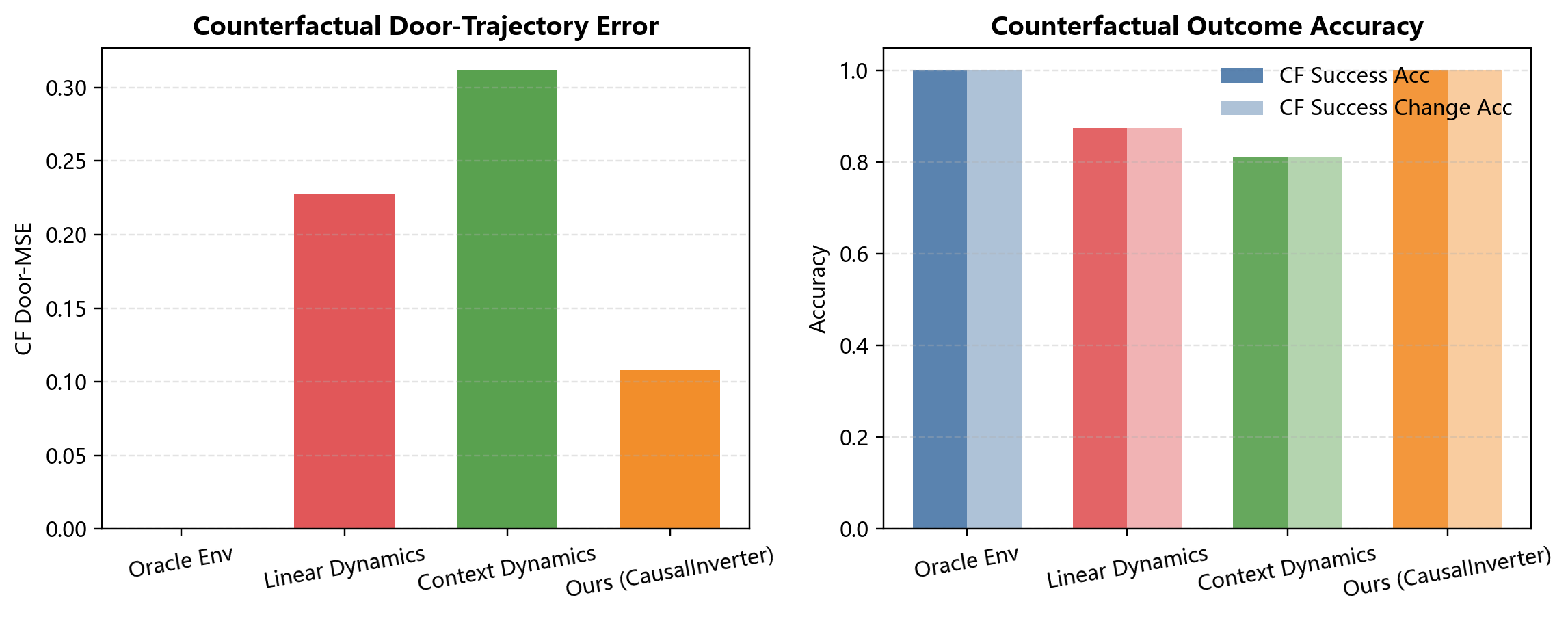}
\caption{Door main results.}
\label{fig:door-main}
\end{figure}

\paragraph{Why Push remains a boundary case.}
The theoretical explanation is best understood as an inductive-bias tradeoff. Adding inverse-transport invariance shrinks the hypothesis class, which reduces variance when the task has enough non-monotone structure to make the counterfactual path genuinely ambiguous. This is the Door regime. Push is different: its composite non-monotonicity score is much lower, so the same constraint can become over-regularizing relative to stronger low-data predictors. In this weak-\(\mathsf{NMS}\) regime, the event label is highly sensitive to small geometric errors near the success boundary, and the theory-inspired bias is not strong enough to dominate flexible state predictors. Push therefore does not refute the theory; it marks the boundary where the structural bias ceases to be the best empirical tradeoff.

\paragraph{Formal multi-seed Push evaluation.}
To verify that the Push conclusion is not a single-split accident, we rerun the formal balanced protocol with seeds $\{7,17,27\}$, producing $96$ counterfactual queries. This robustness replication follows the previously stress-tested linear/context comparison rather than re-training every stronger neural baseline across all seeds. Our method attains CF box-MSE $0.0028 \pm 0.0002$, much lower than Context Dynamics ($0.0121 \pm 0.0011$) and close to Linear Dynamics ($0.0020 \pm 0.0003$), but the event-level success accuracy remains $0.4688 \pm 0.1552$. This multi-seed replication is consistent with the stronger single-split baseline comparison above: Push does not provide stable event-level evidence for the structural bias.

\paragraph{Door robustness.}
We further evaluate Door in two complementary ways. First, a direct three-seed head-to-head among the strongest learned baselines shows that CausalInverter is the most stable event-level model: its CF success accuracy is $0.9583 \pm 0.0295$, versus $0.7083 \pm 0.1031$ for conditional flow and $0.5521 \pm 0.3231$ for Transformer, while the corresponding CF Door-MSE values are $0.1087 \pm 0.0090$, $0.0500 \pm 0.0139$, and $0.3740 \pm 0.1288$; see Appendix Tables~\ref{tab:appendix-door-head2head} and \ref{tab:appendix-door-head2head-seedwise}. We do not extend this head-to-head table to every baseline family: the MLP already appears in the formal single-split main table and the paired significance table, whereas the multi-seed comparison is intentionally kept compact around the three strongest non-oracle models that represent distinct counterfactual rollout families (sequence, flow, and structured inverse). Second, under the broader base, observation-noise, and dynamics-noise robustness suite, our method attains CF success accuracy $0.9583 \pm 0.0295$, $0.9062 \pm 0.1112$, and $0.8021 \pm 0.0531$. The main physical support therefore remains robust at the event level, even though the best continuous geometry error is not always achieved by our model.

\paragraph{Ablation ranking and explanation diagnostics.}
The structured components are easiest to separate on Door, where the task is strongly non-monotone. Removing the gate, transport, or inverse-stability component increases CF Door-MSE from $0.1082$ to $0.6260$, $0.2908$, and $0.2667$, respectively, showing that context-dependent gating is the dominant component. Push shows a different pattern: the full model and the no-gate variant are nearly tied in CF box-MSE ($0.0027$ versus $0.0027$), while removing transport or inverse stability degrades the error to $0.0055$ and $0.0038$. In the lower-\(\mathsf{NMS}\) regime, module contributions are only partially separated and thresholded event metrics remain unstable, so Door carries the main module-ordering claim.

Beyond CF-MSE and success accuracy, we also report \emph{counterfactual explanation-quality diagnostics}: endpoint, stage-wise, and mode-conditioned errors, and on Door additionally contact accuracy, context-sign accuracy, and mode-distribution mismatch. These diagnostics become deliberately mixed once stronger baselines are included. On Door, our model improves over the classical linear/context baselines on endpoint and event recovery, but the stronger neural baselines can equal or surpass it on several continuous and contact-alignment proxies; see Appendix Table~\ref{tab:appendix-door-diagnostics}. We therefore treat these quantities as secondary diagnostics rather than as the main conclusion. The primary empirical message is more specific: under strong non-monotonicity, our structured inverse remains a parameter-efficient route to event-level branch recovery, not a uniform optimizer of every continuous proxy.
Figure~\ref{fig:door-main} gives the corresponding visual summary for the Door task.

The combined evidence supports a clear empirical boundary: the proposed structural bias is most useful when the task contains strong branch-like non-monotone contact structure and the evaluation depends on selecting the correct counterfactual branch, as in Door. In that regime it yields a compact, parameter-efficient solution with the strongest event-level recovery and the most stable branch preservation across seeds. On weakly non-monotone tasks such as Push, stronger low-data state predictors can remain competitive or better.

\section{Related Work}

Classical SCM work characterizes when interventions and counterfactuals are identifiable from observed distributions, often by imposing asymmetric functional restrictions such as additive-noise, post-nonlinear, or location-scale structure \citep{pearl2009causality, shpitser2008complete, peters2017elements, hoyer2009nonlinear, zhang2010distinguishing, peters2014causal, immer2023lsnm, dong2024partiallinear}. Our question is different: within shared-order triangular models, when must observationally equivalent SCMs agree on counterfactual semantics?

Recent work on bijective SCMs, exogenous isomorphism, and invertible causal transports shows that observational equivalence alone is insufficient unless models preserve the same exogenous worlds up to a compatible coordinate change \citep{nasr2023bijective, chen2025exogenous, xia2023neural, balgi2024counterfactually, pawlowski2020deep, chen2024exomatching, yan2023counterfactualgen, zhou2024domaincf, rahman2024conditionalgen, tan2024partialid, galhotra2024conditioning, khemakhem2021causal, wehenkel2021graphical, javaloy2023causalnf, manela2024marginalflows}. World-model work is complementary: predictive latent dynamics can support planning, but predictive quality alone need not imply causal validity \citep{hafner2020dream, hafner2021dreamerv2, zhang2024pivotr, cadei2024smoke}. We extend this line by isolating the extra invariance needed once global monotonicity is dropped. A compact family-level comparison is deferred to Appendix Table~\ref{tab:related-comparison}.

\section{Conclusion}

We studied counterfactual identifiability beyond global monotonicity in shared-order triangular SCMs. Mechanism-wise invertibility together with context-independent inverse transport is sufficient for complete counterfactual identifiability within this class because it is equivalent to exogenous isomorphism. This yields the broader identifiable family \NM, which accommodates context-dependent orientation reversals excluded by \TM. Empirically, the synthetic benchmarks, especially the continuous bridge sweep, and MuJoCo Door provide the clearest support for this bias. Push remains the intended boundary case: once stronger low-data predictors are included, the structural bias does not dominate there. On Door, the evidence is sharper: our structured inverse does not optimize every continuous proxy, but it achieves the most reliable event-level branch recovery and the strongest multi-seed stability among the strongest compact learned baselines we tested.

\bibliographystyle{iclr2026_conference}
\bibliography{iclr2026_conference}

\clearpage
\appendix

\section{Full Proofs}

\subsection{Proof of Lemma~\ref{lem:global-bijection}}

\noindent\textbf{Injectivity.}
Assume $\Gamma_M(u)=\Gamma_M(\tilde u)$. We show by induction on the causal order that $u_i=\tilde u_i$ for all $i$.

For $i=1$, the equality $\Gamma_M(u)=\Gamma_M(\tilde u)$ implies
\[
f_1(u_1)=f_1(\tilde u_1).
\]
Since $u_1 \mapsto f_1(u_1)$ is bijective, we obtain $u_1=\tilde u_1$.

Now assume $u_j=\tilde u_j$ for all $j<i$. Then the corresponding endogenous prefixes coincide:
\[
V_{1:i-1}(u)=V_{1:i-1}(\tilde u).
\]
Because $\Gamma_M(u)=\Gamma_M(\tilde u)$, we also have
\[
f_i(V_{1:i-1}(u),u_i)=f_i(V_{1:i-1}(\tilde u),\tilde u_i).
\]
The parent prefixes are equal, so this reduces to
\[
f_i(v_{<i},u_i)=f_i(v_{<i},\tilde u_i)
\]
for the common context $v_{<i}=V_{1:i-1}(u)$. Mechanism-wise invertibility implies $u_i=\tilde u_i$.

Thus $u=\tilde u$, and $\Gamma_M$ is injective.

\medskip
\noindent\textbf{Surjectivity.}
Let $v=(v_1,\dots,v_d)\in\Omega_V$ be arbitrary. We construct $u=(u_1,\dots,u_d)$ recursively such that $\Gamma_M(u)=v$.

First define
\[
u_1 = f_1^{-1}(v_1),
\]
which is well-defined because $f_1$ is bijective. Suppose $u_1,\dots,u_{i-1}$ have already been defined so that the first $i-1$ coordinates generated by the SCM equal $v_{1:i-1}$. Then define
\[
u_i = f_i(v_{<i},\cdot)^{-1}(v_i).
\]
Again this is well-defined by mechanism-wise invertibility under the fixed context $v_{<i}$.

Proceeding recursively up to $i=d$ yields an exogenous vector $u$ such that $\Gamma_M(u)=v$. Hence $\Gamma_M$ is surjective.

Therefore $\Gamma_M$ is a bijection. \qed

\subsection{Proof of Theorem~\ref{thm:transport-ei}}

We prove the two directions separately.

\subsubsection*{(1) Context-independent inverse transport implies exogenous isomorphism}

Assume that for every $i$ there exists a bijection $\psi_i$ such that
\[
(f_i'(v_{<i},\cdot))^{-1}\circ f_i(v_{<i},\cdot)=\psi_i
\]
for all parent contexts $v_{<i}$. Equivalently,
\[
f_i'(v_{<i},\psi_i(u_i))=f_i(v_{<i},u_i)
\]
for every $u_i$ and every $v_{<i}$.

Define $\psi=(\psi_1,\dots,\psi_d)$. We show
\[
\Gamma_{M'}(\psi(u))=\Gamma_M(u)
\]
for every $u\in\Omega_U$ by induction on the causal order.

For $i=1$,
\[
V_1^{M'}(\psi_1(u_1)) = f_1'(\psi_1(u_1)) = f_1(u_1)=V_1^M(u_1).
\]
Assume that the first $i-1$ endogenous coordinates agree:
\[
V_{1:i-1}^{M'}(\psi(u))=V_{1:i-1}^M(u).
\]
Then
\begin{align*}
V_i^{M'}(\psi(u))
&=
f_i'(V_{1:i-1}^{M'}(\psi(u)),\psi_i(u_i)) \\
&=
f_i'(V_{1:i-1}^{M}(u),\psi_i(u_i)) \\
&=
f_i(V_{1:i-1}^{M}(u),u_i) \\
&=
V_i^M(u).
\end{align*}
So all endogenous coordinates match, and therefore
\[
\Gamma_{M'}\circ\psi=\Gamma_M.
\]

Now use observational equivalence:
\[
(\Gamma_M)_{\sharp}P_U = P_V^M = P_V^{M'} = (\Gamma_{M'})_{\sharp}P_{U'}.
\]
Since $\Gamma_{M'}\circ\psi=\Gamma_M$, we obtain
\[
(\Gamma_{M'})_{\sharp}(\psi_{\sharp}P_U)= (\Gamma_M)_{\sharp}P_U = (\Gamma_{M'})_{\sharp}P_{U'}.
\]
By Lemma~\ref{lem:global-bijection}, $\Gamma_{M'}$ is bijective, so its pushforward is injective on probability measures. Hence
\[
P_{U'}=\psi_{\sharp}P_U.
\]
Together with the mechanism identity already shown, this proves $M\sim_{\EI}M'$.

\subsubsection*{(2) Exogenous isomorphism implies context-independent inverse transport}

Now assume $M\sim_{\EI}M'$. Then for every $i$ and every $v_{<i}$,
\[
f_i'(v_{<i},\psi_i(u_i))=f_i(v_{<i},u_i).
\]
Left-compose with $(f_i'(v_{<i},\cdot))^{-1}$ to obtain
\[
\psi_i = (f_i'(v_{<i},\cdot))^{-1}\circ f_i(v_{<i},\cdot).
\]
Since the left-hand side does not depend on $v_{<i}$, neither does the right-hand side. Therefore the inverse transport is context-independent.

This completes the proof. \qed

\subsection{Proof of Theorem~\ref{thm:ei-cf}}

Assume $M\sim_{\EI}M'$ through $\psi=(\psi_1,\dots,\psi_d)$.

\subsubsection*{Step 1: pointwise equality of interventional responses}

Fix an intervention $\doop(V_A=a)$. For $u\in\Omega_U$, define $u'=\psi(u)$. We show by induction on the causal order that
\[
V^{M,\doop(V_A=a)}(u)=V^{M',\doop(V_A=a)}(u').
\]

If $i\in A$, both models assign the same intervened value $a_i$.

If $i\notin A$, assume the first $i-1$ post-intervention coordinates are equal. Then
\begin{align*}
V_i^{M',\doop(V_A=a)}(u')
&=
f_i'\!\left(V_{<i}^{M',\doop(V_A=a)}(u'),u_i'\right) \\
&=
f_i'\!\left(V_{<i}^{M,\doop(V_A=a)}(u),\psi_i(u_i)\right) \\
&=
f_i\!\left(V_{<i}^{M,\doop(V_A=a)}(u),u_i\right) \\
&=
V_i^{M,\doop(V_A=a)}(u).
\end{align*}
So the interventional potential responses agree pointwise after exogenous matching.

\subsubsection*{Step 2: equality of interventional distributions}

For any measurable target set $B$,
\begin{align*}
P_M\!\left(V_T^{\doop(V_A=a)}\in B\right)
&=
\int \mathbf{1}\!\left[V_T^{M,\doop(V_A=a)}(u)\in B\right]\,dP_U(u) \\
&=
\int \mathbf{1}\!\left[V_T^{M',\doop(V_A=a)}(\psi(u))\in B\right]\,dP_U(u).
\end{align*}
Using $P_{U'}=\psi_{\sharp}P_U$ gives
\[
P_M\!\left(V_T^{\doop(V_A=a)}\in B\right)
=
P_{M'}\!\left(V_T^{\doop(V_A=a)}\in B\right).
\]

\subsubsection*{Step 3: equality of factual evidence and posterior conditioning}

Let $E=e$ be factual evidence. Define the corresponding measurable subsets of exogenous space:
\[
\mathcal{U}_e^M = \{u:\Gamma_M(u)\in e\}, \qquad
\mathcal{U}_e^{M'} = \{u':\Gamma_{M'}(u')\in e\}.
\]
Because $\Gamma_{M'}\circ\psi=\Gamma_M$, we have
\[
u\in \mathcal{U}_e^M \quad \Longleftrightarrow \quad \psi(u)\in \mathcal{U}_e^{M'}.
\]
Hence the factual posterior sets correspond exactly under $\psi$, and the induced posteriors are pushforward-equivalent. Applying Step 2 under these conditional measures yields
\[
P_M\!\left(V_T^{\doop(V_A=a)}\in B \mid E=e\right)
=
P_{M'}\!\left(V_T^{\doop(V_A=a)}\in B \mid E=e\right).
\]

Therefore exogenous isomorphism implies complete counterfactual identifiability. \qed

\subsection{Proof of Proposition~\ref{prop:counterexample}}

Consider
\[
M:\quad X=U_X,\qquad Y=\operatorname{sgn}(X)U_Y,
\]
and
\[
M':\quad X=U_X',\qquad Y=U_Y',
\]
where $U_X,U_Y,U_X',U_Y'$ are independent standard Gaussians.

\subsubsection*{Observational equivalence}

In both models, $X$ is standard Gaussian. Conditional on $X=x$, model $M$ gives
\[
Y \mid X=x = \operatorname{sgn}(x)U_Y.
\]
Since $U_Y$ is symmetric around zero, $\operatorname{sgn}(x)U_Y$ has the same distribution as $U_Y$. Therefore
\[
P_M(Y\mid X=x)=P_{M'}(Y\mid X=x)
\]
for all $x$, and the two models induce the same joint observational distribution.

\subsubsection*{Counterfactual disagreement}

Now fix the factual sample $(X=1,Y=y)$.

In model $M$, factual abduction yields
\[
U_X=1,\qquad U_Y=y,
\]
because $\operatorname{sgn}(1)=1$. Under the intervention $X\leftarrow -1$, the same exogenous state gives
\[
Y_{X\leftarrow -1} = \operatorname{sgn}(-1)\,U_Y = -y.
\]

In model $M'$, factual abduction yields
\[
U_X'=1,\qquad U_Y'=y,
\]
and under the same intervention,
\[
Y_{X\leftarrow -1}=U_Y'=y.
\]
So the two models disagree on the counterfactual outcome despite being observationally equivalent.

\subsubsection*{Source of failure}

The cross-model inverse transport is
\[
(f'(x,\cdot))^{-1}\circ f(x,\cdot)=\operatorname{sgn}(x)\cdot(\cdot),
\]
which depends explicitly on $x$. Thus mechanism-wise invertibility alone is not sufficient: what fails is precisely context-independent inverse transport. \qed

\subsection{Why segmented \TM\ needs stronger assumptions}

We make the discussion in Section~3 explicit with a hidden-phase example. Let
\[
X=U_X,\qquad S=\mathbf{1}[U_S\le \sigma(X)],\qquad Y=(2S-1)U_Y,
\]
where $U_S\sim\mathrm{Unif}(0,1)$ is unobserved, $U_Y$ is supported on positive values, and $\sigma:\mathbb{R}\to(0,1)$ is a context-dependent contact law.

\subsubsection*{Why a segmented monotone description exists}

If the phase label $S$ were given, then the mechanism for $Y$ would be monotone inside each segment:
\[
Y=U_Y \quad \text{when } S=1,\qquad
Y=-U_Y \quad \text{when } S=0.
\]
Thus a segmented \TM\ representation is possible only in an augmented system that reveals the phase.

\subsubsection*{Why this does not solve the observed-variable problem}

In the observed system, the phase boundary is not a deterministic function of the visible context $X$ alone. It is mediated by the hidden micro-state $U_S$. Moreover, under an intervention $X\leftarrow x'$, the counterfactual phase becomes
\[
S^{\doop(X=x')}=\mathbf{1}[U_S\le \sigma(x')],
\]
so the post-intervention phase transition depends on the same hidden variable. Therefore a segmented \TM\ model over observed variables is not self-contained: to answer the counterfactual, it must additionally know either
\begin{enumerate}[leftmargin=1.4em]
\item the latent phase $S$ itself, or
\item the hidden transition rule carried by $U_S$.
\end{enumerate}
These are exactly the extra assumptions referred to in the main text. In embodied terms, this is the common situation where sticking, slipping, or latch engagement depends on contact micro-geometry that is not recoverable from the visible pose alone.

\subsection{Further interpretation of exogenous isomorphism}

The role of exogenous isomorphism can also be stated operationally. In an embodied task, a counterfactual query is meant to compare two action choices under the same latent rollout-specific disturbances. Exogenous isomorphism is the condition that preserves those disturbances one-to-one across models. Without it, two models may agree on all visible trajectories while disagreeing on which hidden contact realization, friction realization, or pose perturbation produced a given factual sample. Counterfactual disagreement then follows not from different physics in the visible state space, but from a silent mismatch in what counts as ``the same latent episode.'' This is why exogenous isomorphism is the right bridge between the mathematical theory and the physical semantics of counterfactual reasoning.

\section{Balanced Counterfactual Sampling Protocol}

\subsection{The \texttt{balanced\_pool\_v2} procedure}

Both formal MuJoCo tasks use the same counterfactual sampling logic. The procedure is:

\begin{enumerate}[leftmargin=1.4em]
\item For each factual rollout, enumerate a candidate pool of counterfactual interventions.
\item Mark each candidate with its transition label among $\{S\!\to\!S, S\!\to\!F, F\!\to\!S, F\!\to\!F\}$ and with a Boolean \texttt{success\_change} indicator.
\item Compute the endpoint displacement statistic \texttt{endpoint\_delta}. A candidate is considered \emph{informative} if it changes success, or if its endpoint displacement is above the 35th percentile of the non-zero endpoint displacements in the candidate pool.
\item Split informative candidates into \emph{change} and \emph{no-change} subsets.
\item Select approximately half the final queries from the change subset and the remainder from the no-change subset.
\item Inside each subset, perform round-robin selection over transition labels while enforcing at most one selected query per factual rollout (\texttt{factual\_id}).
\item If the target query budget is not yet filled, greedily back-fill from the remaining informative candidates while still enforcing unique factual rollouts.
\end{enumerate}

This sampler is intentionally not optimized to maximize the gap between models. Its purpose is to approximate a balanced, per-factual counterfactual evaluation set in which success-changing and non-changing queries are both represented.

\subsection{Main-task query statistics}

Table~\ref{tab:appendix-sampler} lists the realized query statistics of the formal MuJoCo Push and Door main experiments.

\begin{table}[t]
\caption{Formal balanced-sampler query statistics for the main MuJoCo tasks.}
\label{tab:appendix-sampler}
\centering
\small
\begin{tabular}{lcccccc}
\toprule
Task & Queries & Fact. Succ. & CF Succ. & Change Rate & Transition Counts & Window Mean \\
\midrule
Push & 32 & 0.2500 & 0.2500 & 0.5000 & $0/8/8/16$ & 2.91 \\
Door & 32 & 0.6875 & 0.3125 & 0.5000 & $8/14/2/8$ & 6.28 \\
\bottomrule
\end{tabular}
\end{table}

In Table~\ref{tab:appendix-sampler}, transition counts are ordered as
\[
S\!\to\!S \,/\, S\!\to\!F \,/\, F\!\to\!S \,/\, F\!\to\!F.
\]
For task-level non-monotonicity comparison, we also use the composite score
\[
\mathsf{NMS}=\rho_{\mathrm{flip}}\cdot \rho_{\mathrm{contact}},
\]
with $\rho_{\mathrm{flip}}$ the Context Flip Ratio and $\rho_{\mathrm{contact}}$ the contact ratio measured on the test split. Under this definition, Push has $\mathsf{NMS}\approx 0.175$ from $(0.4184, 0.4184)$, while Door has $\mathsf{NMS}\approx 0.696$ from $(0.7853,0.8869)$. We use this score only as an empirical summary of weak versus strong task non-monotonicity; it is not an additional assumption of the theory.
For the formal three-seed MuJoCo Push replication, the aggregated statistics are $96$ total queries, factual success rate $0.1979 \pm 0.0390$, counterfactual success rate $0.3229 \pm 0.0642$, success-change rate $0.5000 \pm 0.0000$, intervention-window mean $2.88 \pm 0.14$, and transition counts
\[
S\!\to\!S=1,\quad S\!\to\!F=18,\quad F\!\to\!S=30,\quad F\!\to\!F=47.
\]

\section{Implementation Details}

\subsection{Synthetic benchmark}

The synthetic benchmark is generated by \texttt{code/experiment\_1\_1/runner.py}. The full sweep uses:
\begin{itemize}[leftmargin=1.2em]
\item mechanism families: \texttt{global\_monotone}, \texttt{threshold\_flip}, and \texttt{smooth\_flip};
\item noise families: Gaussian, skewed, Gaussian mixture, and Student-$t$;
\item dimensions: $d \in \{3,5,10\}$;
\item training sizes: $N \in \{2000,10000,50000\}$.
\end{itemize}
This yields $3 \times 4 \times 3 \times 3 = 108$ configurations. For each configuration, the test size is \texttt{max(1000, N/5)} and the counterfactual evaluation size is \texttt{max(500, N/10)}. Seeds are assigned sequentially from $7$ upward across the sweep.

\paragraph{Continuous non-monotonicity bridge.}
The bridge sweep is generated by \texttt{code/experiment\_1\_2/run\_nonmonotone\_bridge.py}. It fixes $d=5$, train/test/counterfactual sizes $10000/2000/1000$, and evaluates $8$ control strengths
\[
0.00,\ 0.15,\ 0.30,\ 0.45,\ 0.60,\ 0.75,\ 0.90,\ 1.00
\]
across the same four exogenous noise families and five seeds $\{7,13,23,37,53\}$, for $8 \times 4 \times 5 = 160$ total runs. The bridge SCM calibrates a target sign-flip rate for each non-root mechanism, so the realized empirical synthetic non-monotonicity score rises smoothly from near $0$ to near $1$ rather than drifting unpredictably across seeds. For this bridge only, the synthetic score is
\[
\mathsf{NMS}_{\mathrm{synth}}
=
\frac{1}{d-1}\sum_{i=2}^d 2\min(r_i,1-r_i),
\]
where $r_i$ is the empirical negative-direction rate of mechanism $i$ on the test split. Thus $\mathsf{NMS}_{\mathrm{synth}}=0$ corresponds to a globally monotone orientation field and $\mathsf{NMS}_{\mathrm{synth}}=1$ to a maximally balanced parent-dependent sign flip.

\subsection{Physical datasets}

The physical experiments use the dataset size
\[
\texttt{train/test/cf} = 128/32/32
\]
for all formal MuJoCo main results and robustness runs. Here these counts refer to \emph{episodes}, not one-step supervised samples. Because the learned dynamics baselines are trained from per-step transitions extracted from each trajectory, the formal single-seed training split yields $17{,}289$ Push transitions and $10{,}936$ Door transitions for the feedforward and autoregressive regressors. The GRU baseline is trained on the same $128$ trajectories but still sees these $10^4$-scale time steps under masked autoregressive sequence supervision. We therefore interpret the physical comparison as a low-trajectory, state-based regime rather than a literal $128$-sample neural prediction problem.

\paragraph{Push environment.}
The Push configuration uses episode length $140$, physics step $1/120$, action repeat $4$, workspace $[-0.55,0.55]\times[-0.45,0.45]$, pusher radius $0.03$, box half-extent $0.035$, goal radius $0.10$, contact proximity threshold $0.085$, and contact offset threshold $0.010$. The formal MuJoCo Push main experiment fixes
\[
\texttt{contact\_forward\_gain}=1.0,\qquad
\texttt{context\_force\_gain}=0.3.
\]

\paragraph{Door environment.}
The Door configuration uses episode length $140$, physics step $1/120$, action repeat $4$, workspace $[-0.35,0.45]\times[-0.40,0.40]$, effector radius $0.025$, default door length $0.28$, goal tolerance $0.05$, contact proximity threshold $0.090$, and radial offset threshold $0.010$. The formal Door robustness conditions are:
\begin{itemize}[leftmargin=1.2em]
\item \textbf{base:} no additional noise;
\item \textbf{obs\_noise:} observation noise standard deviation $0.01$;
\item \textbf{dyn\_noise:} action execution noise standard deviation $0.04$ and torque noise standard deviation $0.08$.
\end{itemize}

\subsection{Physical CausalInverter instantiations}

In the current low-dimensional physical experiments, CausalInverter is instantiated as a structured regression model rather than a deep neural network. This is deliberate: the goal of these experiments is to isolate the structural contribution of inversion, orientation gating, and transport, not to maximize raw function approximation capacity.

\paragraph{Push model.}
The MuJoCo Push CausalInverter contains two stages.
\begin{enumerate}[leftmargin=1.4em]
\item A global linear backbone predicts the raw next-state box delta from handcrafted features using ridge regression with regularization $2\times 10^{-3}$.
\item A contact-gated residual model predicts corrections in a local contact-aligned frame. The global residual weights use weighted ridge regression with regularization $10^{-2}$, and mode-specific residual weights use regularization $2\times 10^{-2}$ for modes with at least $24$ training samples.
\end{enumerate}
The residual sample weight is
\[
1 + 2.5 \cdot \texttt{contact}
 + 1.5 \cdot \texttt{sign\_strength}
 + 0.75 \cdot \texttt{mode\_boost},
\]
so that contact-heavy and branch-sensitive samples receive larger emphasis. During rollout, residual corrections are clipped coordinate-wise before reconstruction, and blended with the backbone prediction using a context-dependent coefficient $\alpha \in [0,0.85]$.

\paragraph{Door model.}
The MuJoCo Door CausalInverter uses a mode-conditioned linear model in a local door-contact frame. A global ridge regressor is fit with regularization $5\times 10^{-3}$, and mode-specific regressors use regularization $10^{-2}$ for modes with at least $24$ samples. The \texttt{use\_gate}, \texttt{use\_transport}, and \texttt{use\_inverse} ablations correspond exactly to the reported model deletions in the paper:
\begin{itemize}[leftmargin=1.2em]
\item \texttt{use\_gate=False}: remove context-dependent orientation gating;
\item \texttt{use\_transport=False}: remove factual-prefix-preserving counterfactual transport and restart from the full rollout;
\item \texttt{use\_inverse=False}: weaken factual inversion by reconstructing a canonicalized local start observation.
\end{itemize}

\paragraph{Stronger aligned baselines.}
The main MuJoCo tables additionally include five stronger state-based baselines under the same observed-state and counterfactual protocol: an \textbf{MLP World Model}, an \textbf{Autoregressive Dynamics} model, a \textbf{GRU Sequence Model}, a compact \textbf{Transformer Sequence Model}, and a \textbf{Conditional Flow Dynamics} baseline. All five are trained as residual predictors on top of the corresponding linear backbone rather than as pixel-based policies or planners. The MLP baseline uses a feedforward residual regressor, the autoregressive baseline predicts the next-state residual coordinates sequentially, and the flow baseline uses a conditional affine-coupling model trained by likelihood and decoded with zero latent noise at test time. The GRU and Transformer baselines are implemented as masked autoregressive state-sequence predictors: at training time each step predicts its residual target from prefix-visible sequence features only, with strict temporal masking for the Transformer and padding masks for variable-length trajectories. At counterfactual test time, they use the factual prefix only to initialize the recurrent/sequence cache up to the intervention step, and then roll out the post-intervention suffix autoregressively from the intervention state using previously generated states together with the counterfactual action suffix, without access to future observations. Under the balanced sampler, the pre-intervention action prefix is unchanged, so the factual and counterfactual prefixes coincide in action space as well. These are intentionally compact low-data predictors rather than large-scale sequence or vision models, since the goal of the physical section is to test whether the structural bias survives against stronger function approximators under the same limited-state supervision. These stronger baselines are reported on the main balanced splits; the multi-seed Push replication and the Door robustness suite keep the earlier linear/context stress-test design to control runtime and preserve protocol continuity with the earlier ablation analysis. In particular, the Door seed-wise head-to-head is kept to Transformer, conditional flow, and our model because it is meant to compare three distinct rollout families under repeated resampling, while the feedforward MLP is already documented on the formal main split and in the paired significance table.

\subsection{Evaluation metrics}

Push reports forward and counterfactual box-trajectory errors, endpoint errors, final success accuracy, success-change accuracy, and a mode-based latent-consistency proxy. Door reports forward and counterfactual door-angle errors, endpoint errors, success accuracy, success-change accuracy, latent consistency, early latent consistency, contact accuracy, context-sign accuracy, and mode-distribution $\ell_1$ mismatch. We treat the endpoint, stage-wise, mode-conditioned, contact, context-sign, and mode-distribution metrics as \emph{counterfactual explanation-quality diagnostics}: they evaluate not only whether a counterfactual ends in success or failure, but also whether the model preserves the physically meaningful contact mode, branch, and local trajectory structure that make the explanation plausible. Stage-wise and mode-conditioned errors are computed from the same counterfactual rollouts by aligning each time index with a phase label.

\subsection{Artifact locations}

The main artifact directories used in this paper are:
\begin{itemize}[leftmargin=1.2em]
\item \texttt{code/outputs/exp1\_1/} for the synthetic benchmark;
\item \texttt{code/outputs/exp1\_2\_nonmonotone\_bridge/} for the continuous synthetic bridge sweep;
\item \texttt{code/outputs/exp4\_3\_push\_mujoco\_balanced/} for the formal single-seed MuJoCo Push main result;
\item \texttt{code/outputs/exp4\_3\_push\_mujoco\_balanced\_multiseed/} for the formal three-seed MuJoCo Push replication;
\item \texttt{code/outputs/exp4\_3\_door\_mujoco/} for the formal MuJoCo Door main result and ablations;
\item \texttt{code/outputs/exp4\_3\_door\_mujoco\_robustness/} for the Door multi-seed robustness results.
\end{itemize}
Each dataset directory contains \texttt{metadata.json}; each result directory contains raw \texttt{json} summaries and markdown tables.

\subsection{Additional diagnostic tables}

\begin{table}[t]
\caption{Family-level comparison of the structural assumptions that matter for counterfactual semantics.}
\label{tab:related-comparison}
\centering
\scriptsize
\begin{tabular}{@{}lcccc@{}}
\toprule
Family & Local inv. & Stable $U$ align. & CF ident. & Remark \\
\midrule
Monotone Tri-SCMs & Yes & Yes & Yes & Needs global mono. \\
Exogenous-isomorphic SCMs & Varies & By def. & Yes & Semantic criterion \\
Embodied world models & Optional & No & No guarantee & Predictive focus \\
CausalInverter (ours) & Yes & Yes & Yes in \NM & Context flips \\
\bottomrule
\end{tabular}
\end{table}

\begin{table}[t]
\caption{Paired significance tests for the main synthetic benchmark. Each row compares our model against a matched baseline on the same synthetic configurations within a mechanism family. Negative mean differences indicate lower CF-MSE for our model.}
\label{tab:appendix-synth-significance}
\centering
\small
\begin{tabular}{llccc}
\toprule
Mechanism & Baseline & Mean Diff. $\downarrow$ & 95\% CI & Wilcoxon $p$ \\
\midrule
Threshold Flip & TM-SCM Quantile & $-0.5674$ & [$-0.7217$, $-0.4285$] & $1.46\times 10^{-11}$ \\
Threshold Flip & BSCM-Flow Contextual & $-0.5685$ & [$-0.7311$, $-0.4263$] & $1.46\times 10^{-11}$ \\
Smooth Flip & TM-SCM Quantile & $-0.4968$ & [$-0.6413$, $-0.3689$] & $1.46\times 10^{-11}$ \\
Smooth Flip & BSCM-Flow Contextual & $-0.4980$ & [$-0.6434$, $-0.3725$] & $1.46\times 10^{-11}$ \\
\bottomrule
\end{tabular}
\end{table}

\begin{table}[t]
\caption{Continuous synthetic bridge sweep. As the calibrated synthetic non-monotonicity score increases, the counterfactual advantage of our model over TM-SCM grows systematically.}
\label{tab:appendix-synth-bridge}
\centering
\small
\begin{tabular}{lccccc}
\toprule
Strength & $\mathsf{NMS}_{\mathrm{synth}}$ & TM-SCM & Context Flow & Ours & Gain over TM-SCM \\
\midrule
0.00 & 0.000 & 0.0008 & 0.0014 & \textbf{0.0000} & 0.0008 \\
0.15 & 0.147 & 0.3541 & 0.3562 & \textbf{0.2898} & 0.0643 \\
0.30 & 0.298 & 0.5839 & 0.5892 & \textbf{0.4847} & 0.0992 \\
0.45 & 0.447 & 0.7850 & 0.7936 & \textbf{0.3723} & 0.4127 \\
0.60 & 0.600 & 0.9791 & 0.9914 & \textbf{0.2768} & 0.7023 \\
0.75 & 0.745 & 1.0342 & 1.0550 & \textbf{0.2345} & 0.7997 \\
0.90 & 0.894 & 1.0536 & 1.0821 & \textbf{0.2140} & 0.8395 \\
1.00 & 0.978 & 1.0464 & 1.0785 & \textbf{0.2055} & 0.8409 \\
\bottomrule
\end{tabular}
\end{table}

\begin{table}[t]
\caption{Paired significance tests for the synthetic ablation benchmark. The dominant ablation effects on non-monotone mechanisms come from removing the direction gate or transport regularization; context-amplitude and cycle terms are not consistently significant under this sweep.}
\label{tab:appendix-synth-ablation-significance}
\centering
\small
\begin{tabular}{llccc}
\toprule
Mechanism & Comparison & Mean Diff. $\downarrow$ & 95\% CI & Wilcoxon $p$ \\
\midrule
Threshold Flip & Ours vs.\ w/o Gate & $-0.4043$ & [$-0.5837$, $-0.2457$] & $2.86\times 10^{-6}$ \\
Threshold Flip & Ours vs.\ w/o Transport & $-0.0541$ & [$-0.0762$, $-0.0330$] & $2.38\times 10^{-5}$ \\
Threshold Flip & Ours vs.\ w/o Context Amp. & $-0.0002$ & [$-0.0018$, $0.0016$] & $0.3108$ \\
Smooth Flip & Ours vs.\ w/o Gate & $-0.2589$ & [$-0.3677$, $-0.1619$] & $6.68\times 10^{-6}$ \\
Smooth Flip & Ours vs.\ w/o Transport & $-0.0381$ & [$-0.0597$, $-0.0189$] & $1.61\times 10^{-4}$ \\
Smooth Flip & Ours vs.\ w/o Context Amp. & $-0.0024$ & [$-0.0054$, $0.0005$] & $0.1305$ \\
\bottomrule
\end{tabular}
\end{table}

\begin{table}[t]
\caption{Effective physical training sample counts and fitted parameter scales for the main low-data state baselines. Episode counts are not one-step supervision counts: feedforward, autoregressive, and flow baselines train on per-step transitions, while recurrent and Transformer baselines consume the same trajectories under masked autoregressive sequence supervision.}
\label{tab:appendix-physical-scale}
\centering
\scriptsize
\begin{tabular}{lccccccccc}
\toprule
Task & Episodes & Steps & Linear & MLP & AR & GRU & Transf. & Flow & Ours \\
\midrule
Push & 128 & 17,289 & 138 & 34,384 & 138,000 & 68,496 & 79,696 & 117,666 & 714 \\
Door & 128 & 10,936 & 18 & 30,740 & 40,852 & 60,948 & 78,164 & 99,098 & 190 \\
\bottomrule
\end{tabular}
\end{table}

\begin{table}[t]
\caption{Paired significance tests for the main MuJoCo physical comparisons. Negative mean CF-MSE differences indicate lower error for our model; positive event differences indicate higher counterfactual event accuracy for our model.}
\label{tab:appendix-physical-significance}
\centering
\scriptsize
\begin{tabular}{llcccc}
\toprule
Task & Comparison & Mean diff. & 95\% CI & Wilcoxon $p$ & McNemar $p$ \\
\midrule
Push & Ours vs.\ Linear Dyn. & $0.0011$ & [$0.0005$, $0.0017$] & $9.93\times 10^{-1}$ & $9.89\times 10^{-1}$ \\
Push & Ours vs.\ MLP World Model & $-0.0043$ & [$-0.0062$, $-0.0027$] & $3.93\times 10^{-8}$ & $9.89\times 10^{-1}$ \\
Push & Ours vs.\ Transformer Seq. & $0.0006$ & [$0.0001$, $0.0012$] & $9.51\times 10^{-1}$ & $8.87\times 10^{-1}$ \\
Door & Ours vs.\ MLP World Model & $-0.0287$ & [$-0.0926$, $0.0282$] & $1.75\times 10^{-1}$ & $1.00$ \\
Door & Ours vs.\ Transformer Seq. & $-0.0836$ & [$-0.1532$, $-0.0186$] & $3.39\times 10^{-2}$ & $1.00$ \\
Door & Ours vs.\ Cond.\ Flow Dyn. & $0.0486$ & [$0.0130$, $0.0859$] & $9.85\times 10^{-1}$ & $3.12\times 10^{-2}$ \\
\bottomrule
\end{tabular}
\end{table}

\begin{table}[t]
\caption{Door three-seed head-to-head comparison for the strongest learned baselines under the same balanced counterfactual protocol. Means and standard deviations are over seeds.}
\label{tab:appendix-door-head2head}
\centering
\scriptsize
\begin{tabular}{lcccc}
\toprule
Model & Fwd. Door-MSE & CF Door-MSE & CF Success Acc. & CF Latent Cons. \\
\midrule
Transformer Seq. & $0.0912 \pm 0.0580$ & $0.3740 \pm 0.1288$ & $0.5521 \pm 0.3231$ & $0.3058 \pm 0.0180$ \\
Cond.\ Flow Dyn. & $0.0847 \pm 0.0342$ & $\mathbf{0.0500 \pm 0.0139}$ & $0.7083 \pm 0.1031$ & $0.3405 \pm 0.0550$ \\
Ours & $0.1916 \pm 0.0235$ & $0.1087 \pm 0.0090$ & $\mathbf{0.9583 \pm 0.0295}$ & $0.2468 \pm 0.0244$ \\
\bottomrule
\end{tabular}
\end{table}

\begin{table}[t]
\caption{Door seed-wise head-to-head results. Each entry reports `CF Door-MSE / CF Success Acc.' for one seed under the same balanced protocol.}
\label{tab:appendix-door-head2head-seedwise}
\centering
\scriptsize
\begin{tabular}{lccc}
\toprule
Seed & Transformer Seq. & Cond.\ Flow Dyn. & Ours \\
\midrule
7 & $0.1918 / 1.0000$ & $0.0596 / 0.8438$ & $0.1082 / 1.0000$ \\
17 & $0.4672 / 0.2500$ & $0.0600 / 0.6875$ & $0.1199 / 0.9375$ \\
27 & $0.4630 / 0.4062$ & $0.0304 / 0.5938$ & $0.0979 / 0.9375$ \\
\bottomrule
\end{tabular}
\end{table}

\begin{table}[t]
\caption{Balanced MuJoCo Push ablation summary under the same protocol as the main Push result. The continuous CF geometry cleanly separates transport/inverse components from the context baseline, but does not induce a stable gate ranking or event-level ordering.}
\label{tab:appendix-push-ablation}
\centering
\small
\begin{tabular}{lccc}
\toprule
Model & CF Box-MSE & CF Success Acc. & CF Latent Cons. \\
\midrule
Context Dynamics & 0.0108 & 0.7188 & 0.4016 \\
Ours w/o Gate & 0.0027 & 0.4688 & 0.2162 \\
Ours w/o Transport & 0.0055 & 0.7500 & 0.6538 \\
Ours w/o Inverse Stability & 0.0038 & 0.7500 & 0.6538 \\
Ours & 0.0027 & 0.5625 & 0.2145 \\
\bottomrule
\end{tabular}
\end{table}

\begin{table}[t]
\caption{Door explanation-quality diagnostics beyond CF-MSE and event accuracy. These quantities evaluate whether the predicted counterfactual preserves the physically meaningful contact pattern and local phase geometry, but they do not perfectly align with event-level branch recovery once stronger low-data predictors are included.}
\label{tab:appendix-door-diagnostics}
\centering
\scriptsize
\begin{tabular}{lccccc}
\toprule
Model & CF Endpt. $\downarrow$ & Contact $\uparrow$ & Context $\uparrow$ & Early Latent $\uparrow$ & Mode $\ell_1 \downarrow$ \\
\midrule
Linear Dynamics & 0.3539 & 0.5615 & 0.2964 & 0.5472 & 1.2911 \\
Context Dynamics & 0.4847 & 0.4316 & 0.3751 & 0.5307 & 1.3594 \\
MLP World Model & 0.3103 & 0.6134 & 0.5387 & 0.3908 & 1.2917 \\
Autoregressive Dyn. & 0.2797 & 0.8314 & 0.6785 & 0.3712 & 0.8301 \\
GRU Sequence Model & 0.8725 & 0.4287 & 0.3565 & 0.5147 & 1.3611 \\
Transformer Seq. & 0.2220 & 0.4392 & 0.2989 & 0.4569 & 1.3771 \\
Cond. Flow Dyn. & \textbf{0.1831} & \textbf{0.6765} & 0.4941 & 0.5073 & 1.0264 \\
Ours & 0.2990 & 0.5933 & 0.2608 & 0.3933 & 1.4466 \\
\bottomrule
\end{tabular}
\end{table}

\section{Additional Experimental Figures}

This section provides figure-level interpretations that complement the concise captions. The emphasis is on how each figure supports the main claims of the paper: the theoretical role of exogenous alignment, the boundary-aware empirical reading of Push versus Door, and the module-level behavior of CausalInverter.

\begin{figure}[H]
\centering
\includegraphics[width=\linewidth]{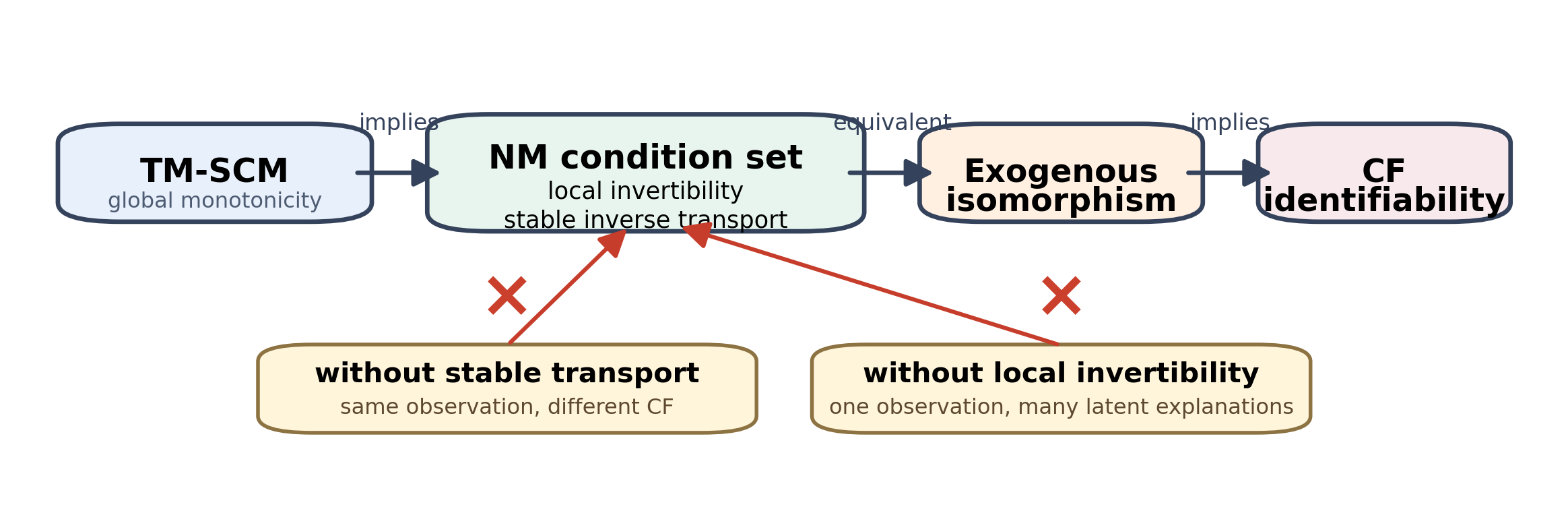}
\caption{The logical structure of the theory.}
\label{fig:theory-map}
\end{figure}

Figure~\ref{fig:theory-map} should be read as the proof roadmap. It separates three logically distinct layers: local mechanism invertibility, cross-context stability of inverse transport, and the semantic consequence of exogenous isomorphism. This figure is useful because the contribution is not simply another sufficient condition for counterfactual identifiability. Rather, the paper shows where the actual obstruction lies once triangular recursion is fixed: if inverse transport remains context-dependent, observationally equivalent models can relabel latent worlds differently and therefore disagree on counterfactuals. The diagram makes this dependency chain explicit.

\begin{figure}[H]
\centering
\includegraphics[width=\linewidth]{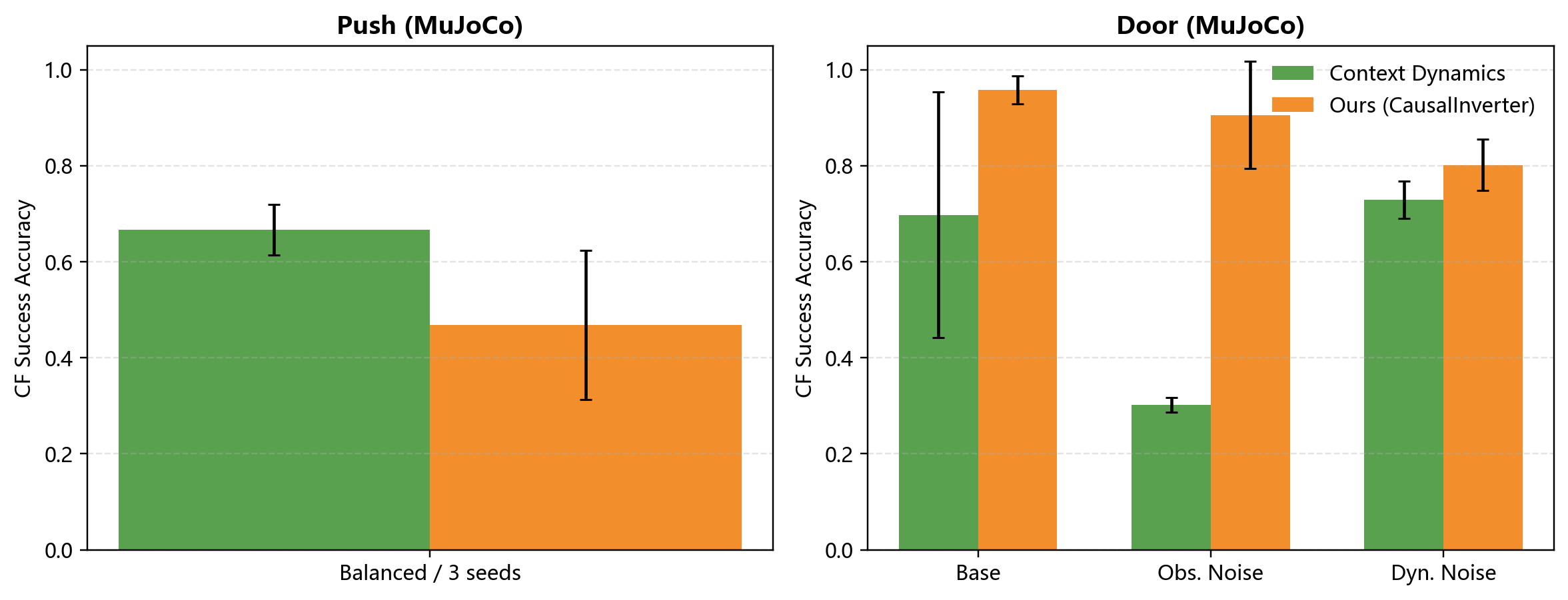}
\caption{Stability results beyond the single main split.}
\label{fig:robustness}
\end{figure}

Figure~\ref{fig:robustness} summarizes the multi-seed robustness experiments. It should be interpreted column by column rather than only by aggregate ranking. In Push, the most stable advantage of our method is on continuous geometry: counterfactual trajectory errors remain competitive across seeds, but event-level metrics fluctuate more strongly, reflecting the weak-\(\mathsf{NMS}\) regime discussed in the main text. In Door, the picture is different. The counterfactual success advantage remains comparatively stable across seeds and noise conditions, which is why Door is the main physical evidence for the theory. The figure therefore reinforces the paper's boundary-aware reading: the proposed structural bias is most decisive when non-monotone branching is sufficiently strong.

All figures below use English-only titles, axis labels, legends, and annotations.

\begin{figure}[H]
\centering
\includegraphics[width=0.48\linewidth]{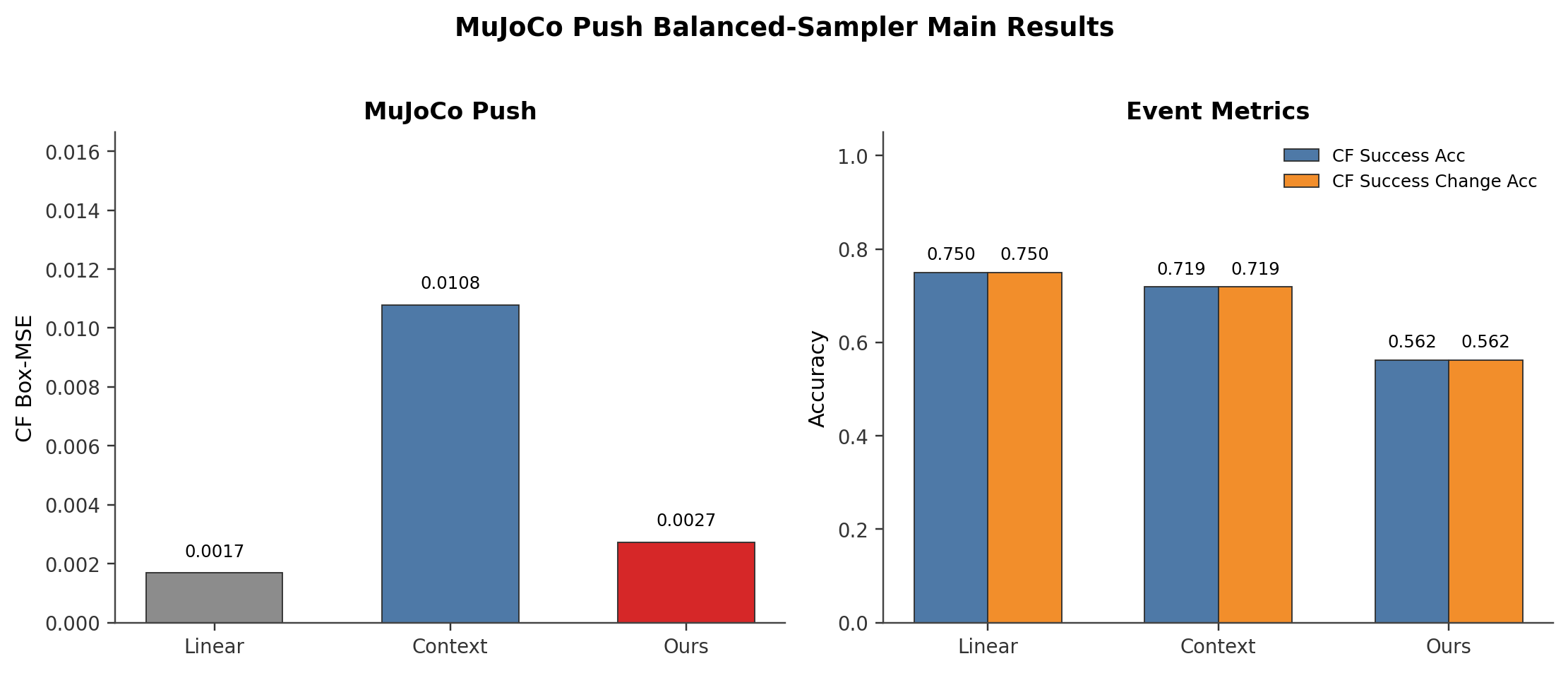}\hfill
\includegraphics[width=0.48\linewidth]{exp4_3_door_main_results_mujoco.png}
\caption{Per-task main result visualizations.}
\label{fig:appendix-main-physical}
\end{figure}

Figure~\ref{fig:appendix-main-physical} expands the main physical story into a more readable representative-baseline comparison. The Push panel should be read as evidence of a genuine boundary case: our method does not separate cleanly from stronger low-data predictors on either continuous or event metrics. The Door panel should be read differently. There, the single split separates three behaviors: Transformer and our model both recover the event labels perfectly but with very different threshold margins, whereas conditional flow achieves the lowest scalar angle error while missing a nontrivial fraction of event labels. The contrast between the two panels is central to the paper: Push is not presented as a contradiction, but as an explicit boundary case.

\begin{figure}[H]
\centering
\includegraphics[width=\linewidth]{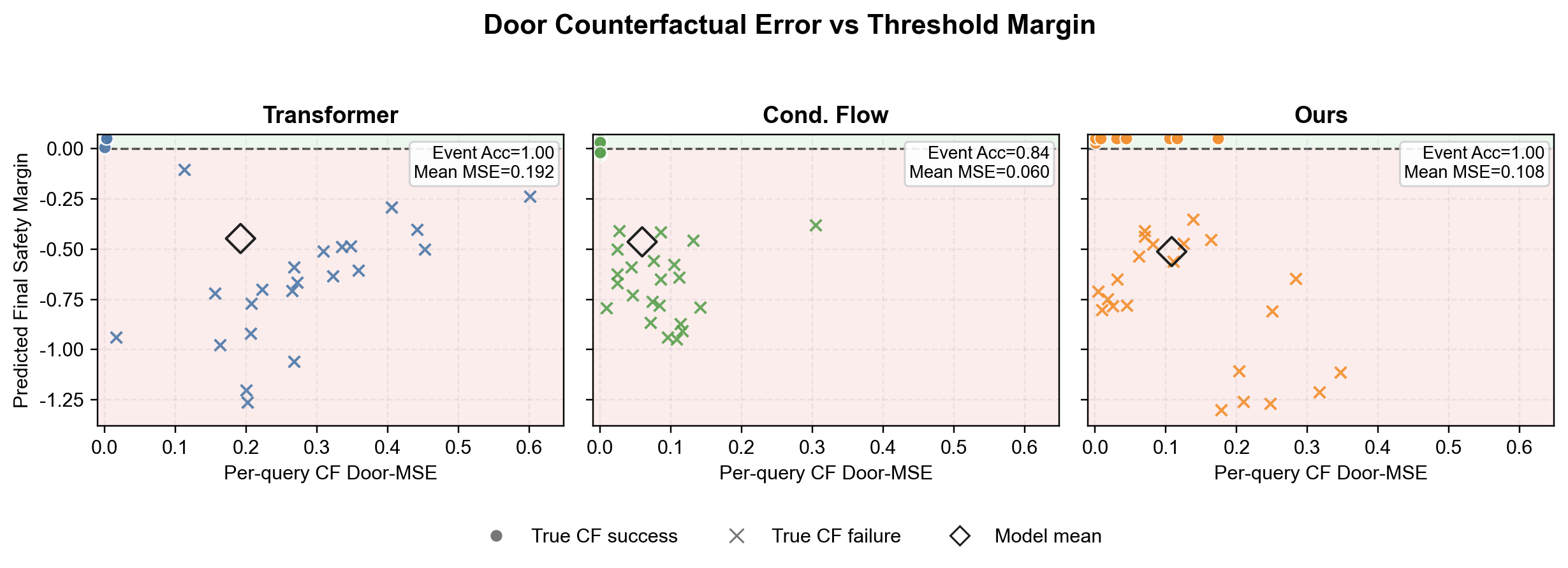}
\caption{Door per-query continuous error versus final threshold margin for three representative learned models. The horizontal line marks the success threshold.}
\label{fig:appendix-door-threshold}
\end{figure}

Figure~\ref{fig:appendix-door-threshold} clarifies the seemingly counterintuitive Door result. The reported `CF Success Acc.` is \emph{event-label accuracy}, not the raw fraction of successful openings; on the formal Door split, only $10/32$ counterfactual queries are true-success cases and the remaining $22/32$ are true-failure cases. Transformer and CausalInverter both attain event accuracy $1.0$ because they place every success query above the threshold and every failure query below it, but they do so with very different safety margins. Transformer often solves the success subset with only a narrow positive buffer (minimum margin $0.0048$) and exhibits the widest per-query CF-MSE spread (std.\ $0.1652$). CausalInverter keeps a larger success-floor margin ($0.0300$) and lowers mean CF Door-MSE to $0.1082$. Conditional flow illustrates the complementary failure mode: it attains the lowest mean CF Door-MSE ($0.0596$), but some success queries cross below threshold (minimum margin $-0.0311$), reducing event accuracy to $0.8438$. This is the intended interpretation of the Door result: continuous error alone does not determine counterfactual branch correctness, and the structured inverse improves branch stability rather than universally minimizing every scalar proxy.

\begin{figure}[H]
\centering
\includegraphics[width=0.48\linewidth]{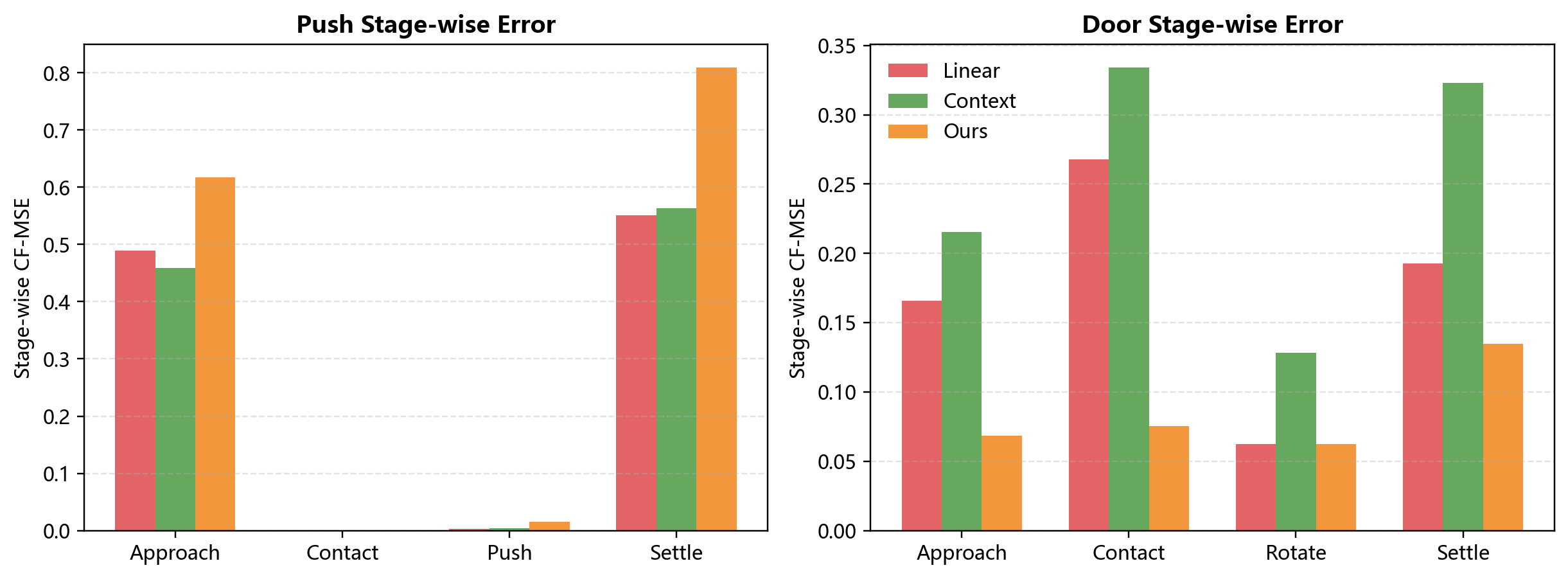}\hfill
\includegraphics[width=0.48\linewidth]{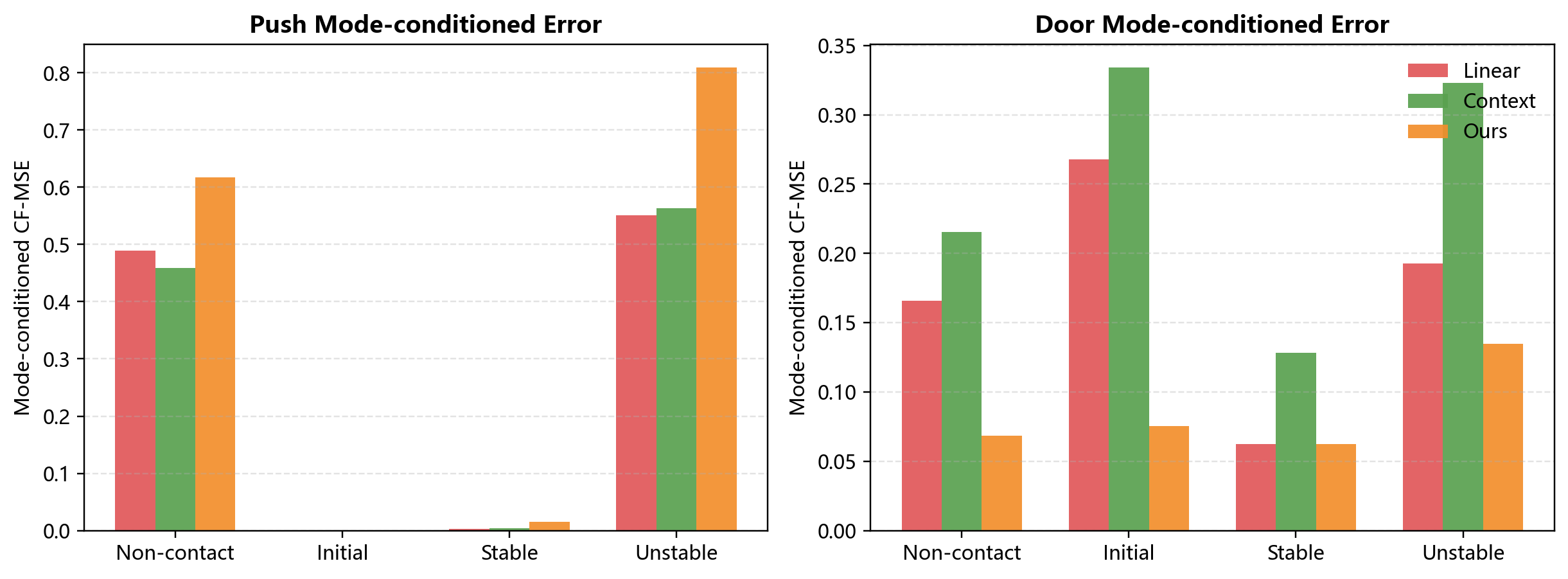}
\caption{Stage-wise and mode-conditioned diagnostics.}
\label{fig:appendix-stage-mode}
\end{figure}

Figure~\ref{fig:appendix-stage-mode} explains \emph{where} the gains come from. The stage-wise panel decomposes error over the temporal phases of the interaction, so the reader can check whether improvement is concentrated only in easy phases or persists in the contact-heavy part of the rollout. The mode-conditioned panel instead groups counterfactual error by interaction regime, which is especially informative for Door because the opening behavior contains clearer contact-dependent branching. Together, these plots show that the strongest benefits of our method are not uniformly distributed over the trajectory. They appear most clearly in the phases and modes where stable exogenous alignment matters most for preserving the correct counterfactual branch.

\begin{figure}[H]
\centering
\includegraphics[width=\linewidth]{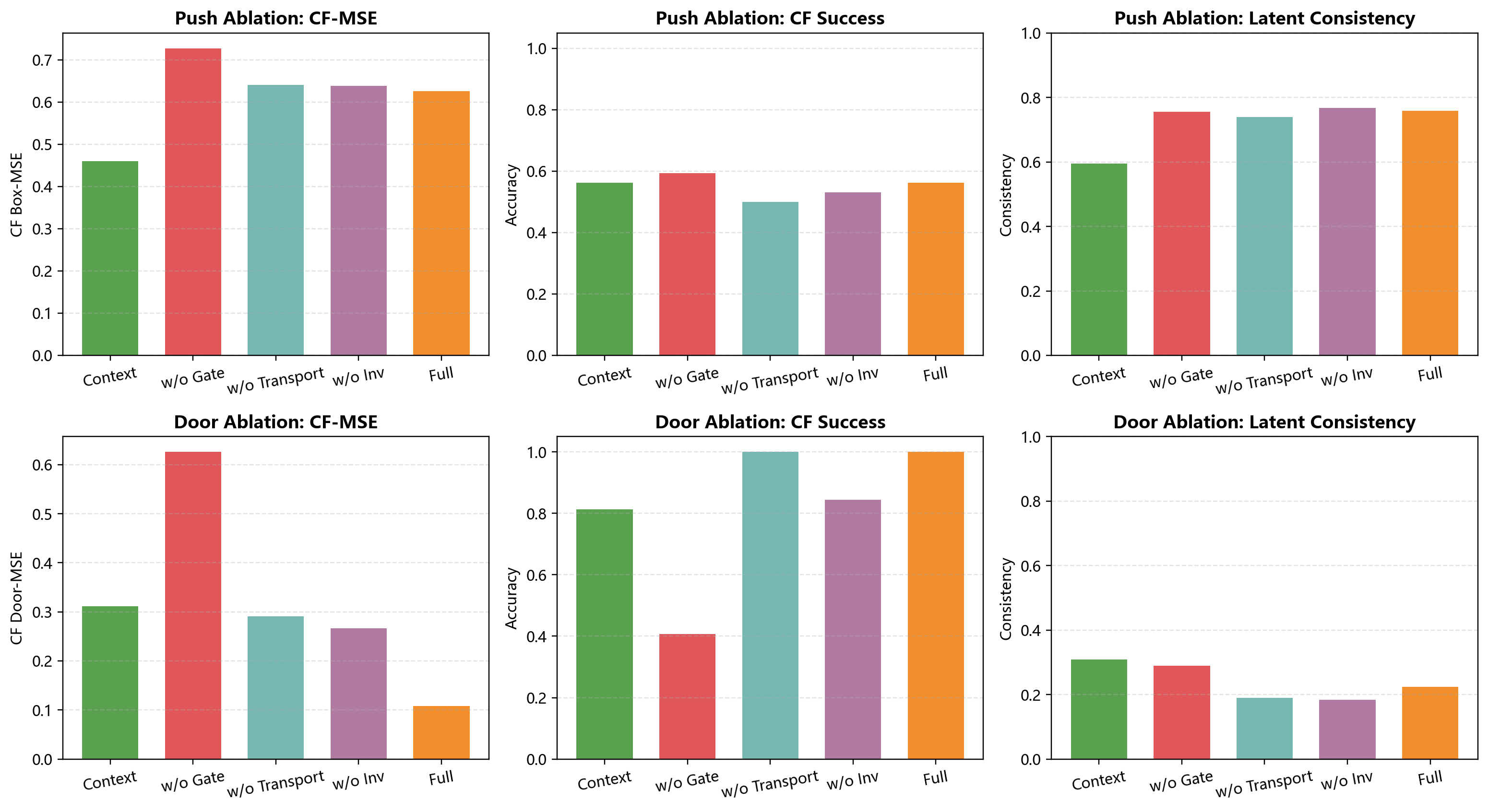}
\caption{Physical ablations for CausalInverter.}
\label{fig:appendix-physical-ablation}
\end{figure}

Figure~\ref{fig:appendix-physical-ablation} should be read as a module-separation study. On Door, removing the direction gate causes the most severe degradation, while removing transport stability or inverse stability also hurts but less dramatically. This is consistent with the theory: once the physical task contains clear context-dependent orientation changes, the model must represent those switches explicitly rather than average them away. Push exhibits a weaker ranking. There, transport and inverse components still help continuous geometry, but the gate does not create the same clean ordering. This asymmetry is informative rather than problematic, because it matches the lower non-monotonicity strength of Push.

\begin{figure}[H]
\centering
\includegraphics[width=0.48\linewidth]{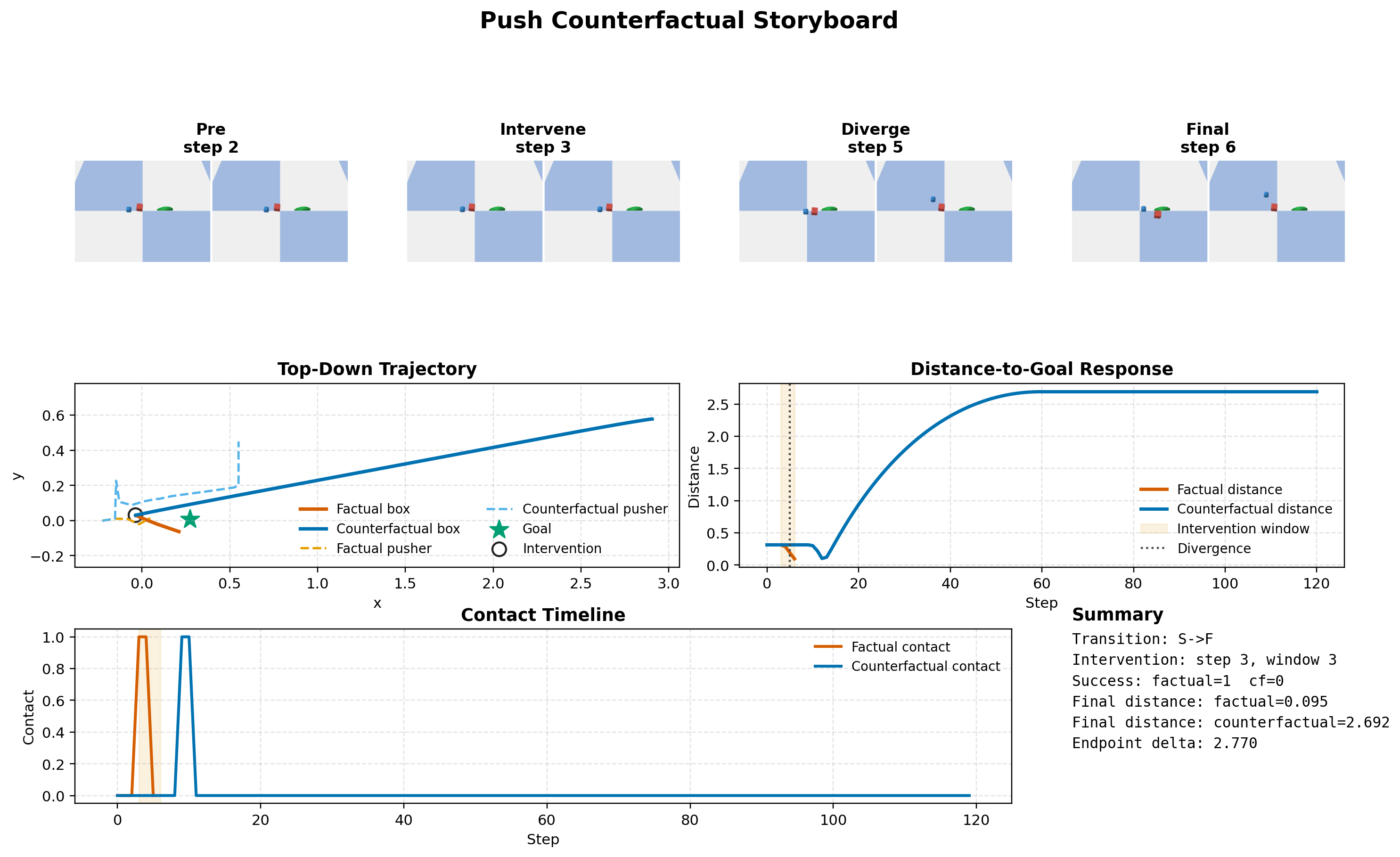}\hfill
\includegraphics[width=0.48\linewidth]{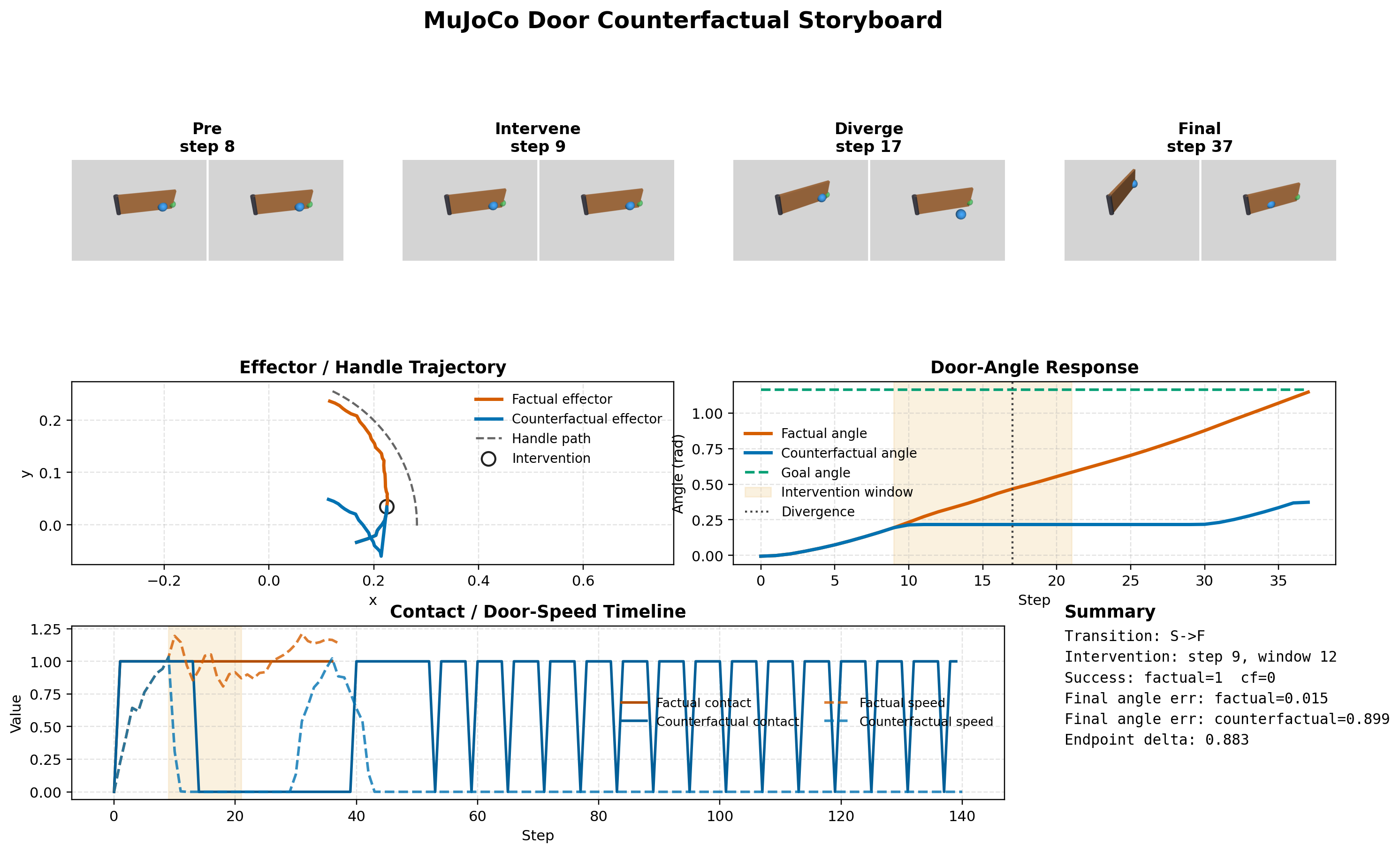}
\caption{Counterfactual storyboards for Push and Door.}
\label{fig:appendix-cf-storyboards}
\end{figure}

Figure~\ref{fig:appendix-cf-storyboards} provides trajectory-level intuition for the counterfactual protocol. Each storyboard starts from the same factual rollout and then shows how the trajectory changes under a counterfactual intervention while keeping the abducted latent background fixed. In Push, the reader should focus on how the box path shifts under the altered action even when the final success label may remain sensitive. In Door, the storyboard more clearly shows branch separation: after intervention, the door-angle evolution, contact pattern, and final opening outcome diverge in a way that is visually easier to distinguish. These examples help connect the formal abduction--action--prediction semantics to an intuitive physical picture.

\begin{figure}[H]
\centering
\includegraphics[width=0.48\linewidth]{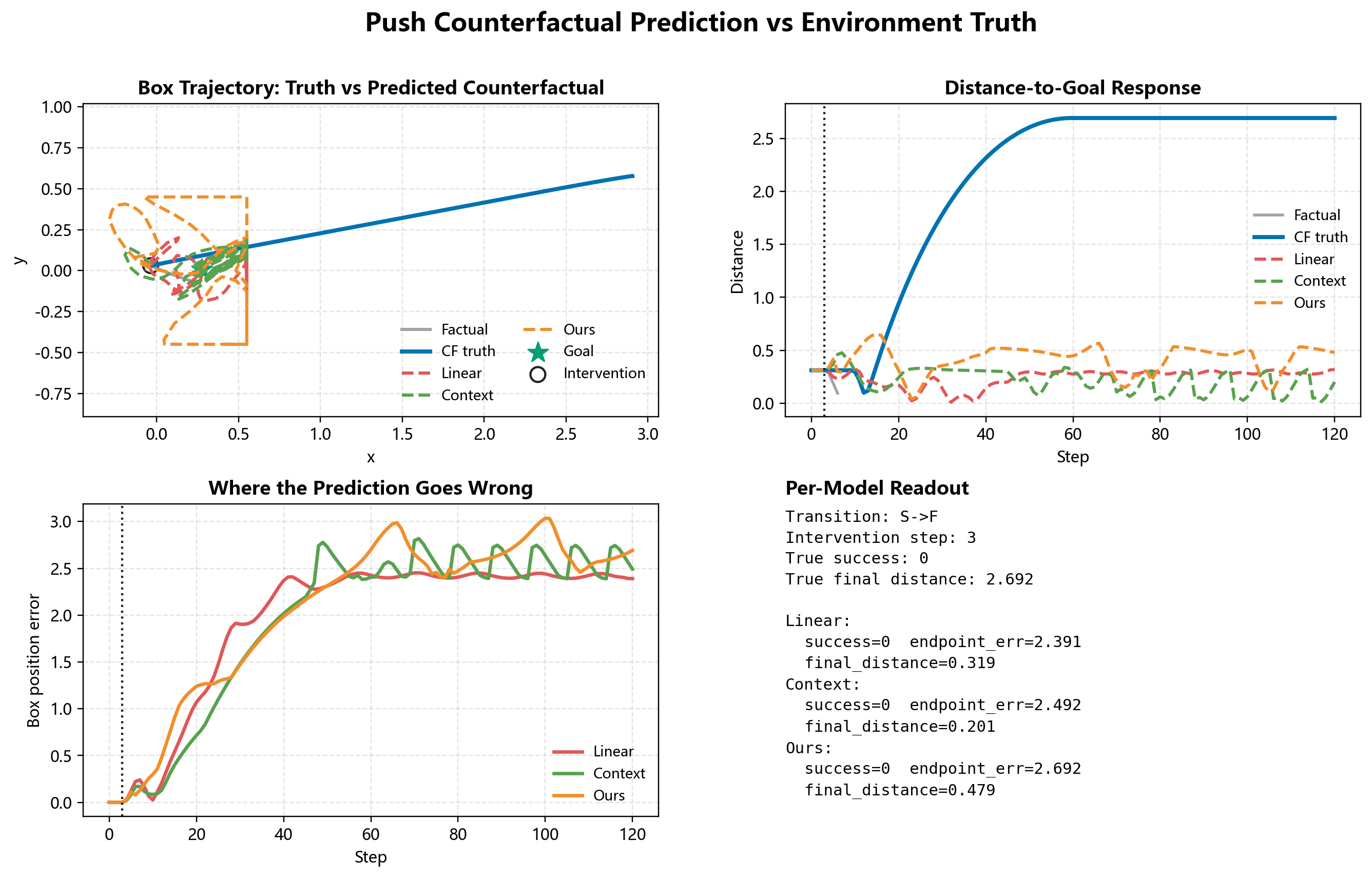}\hfill
\includegraphics[width=0.48\linewidth]{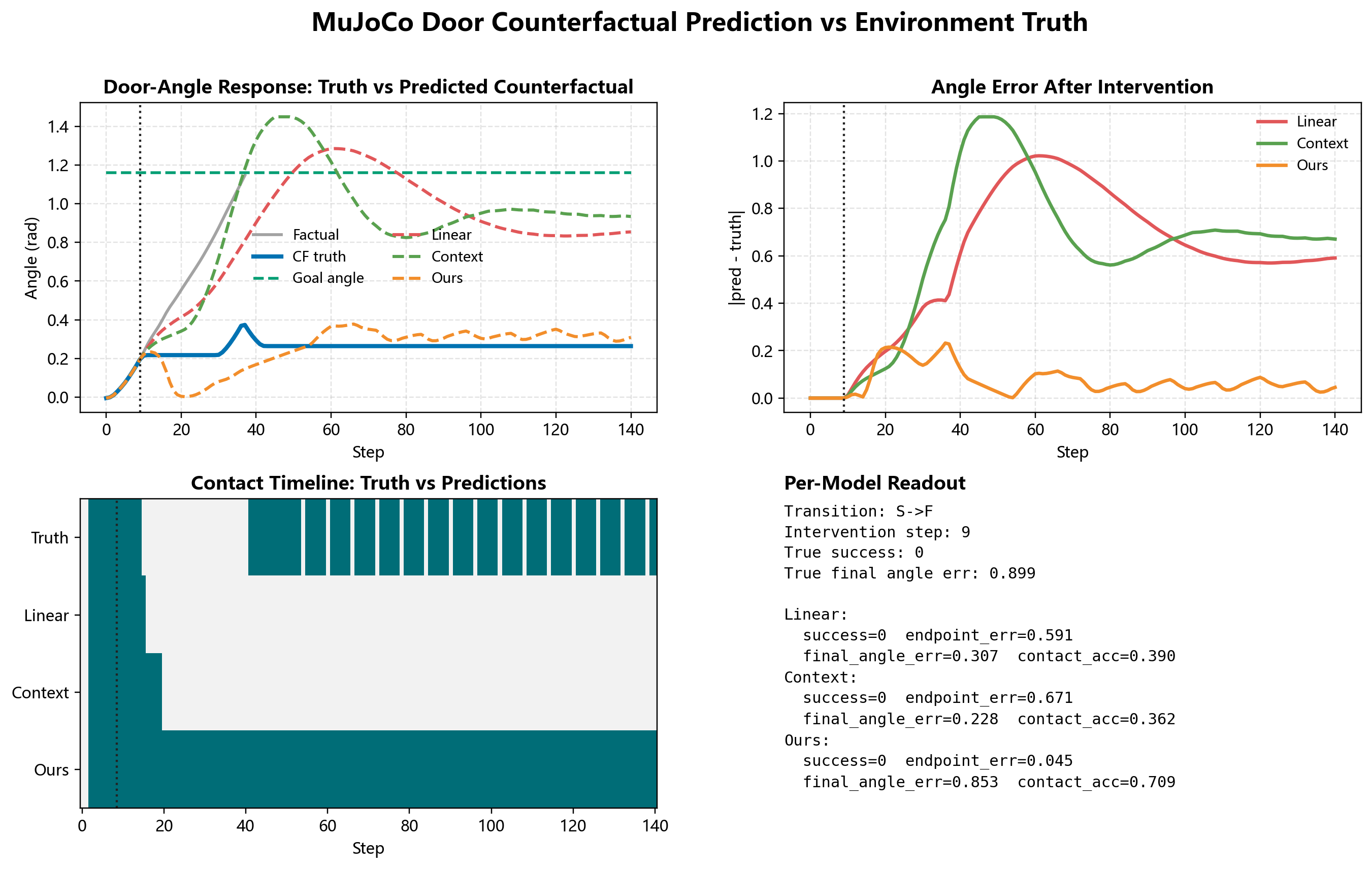}
\caption{Predicted versus environment-truth counterfactuals.}
\label{fig:appendix-pred-vs-truth}
\end{figure}

Figure~\ref{fig:appendix-pred-vs-truth} is particularly useful for diagnosing model error. Instead of only reporting a scalar metric, it overlays the model-predicted counterfactual with the environment-truth counterfactual generated by MuJoCo. The important question is not just whether the final state is correct, but when and how the predicted trajectory starts to deviate. In Push, small geometric deviations can accumulate near the success boundary and flip the event label even when the overall path remains close. In Door, the comparison is more sensitive to whether the model preserves the correct contact-side and opening branch. The figure therefore explains why event metrics and geometric metrics can separate differently across the two tasks.

\begin{figure}[H]
\centering
\includegraphics[width=\linewidth]{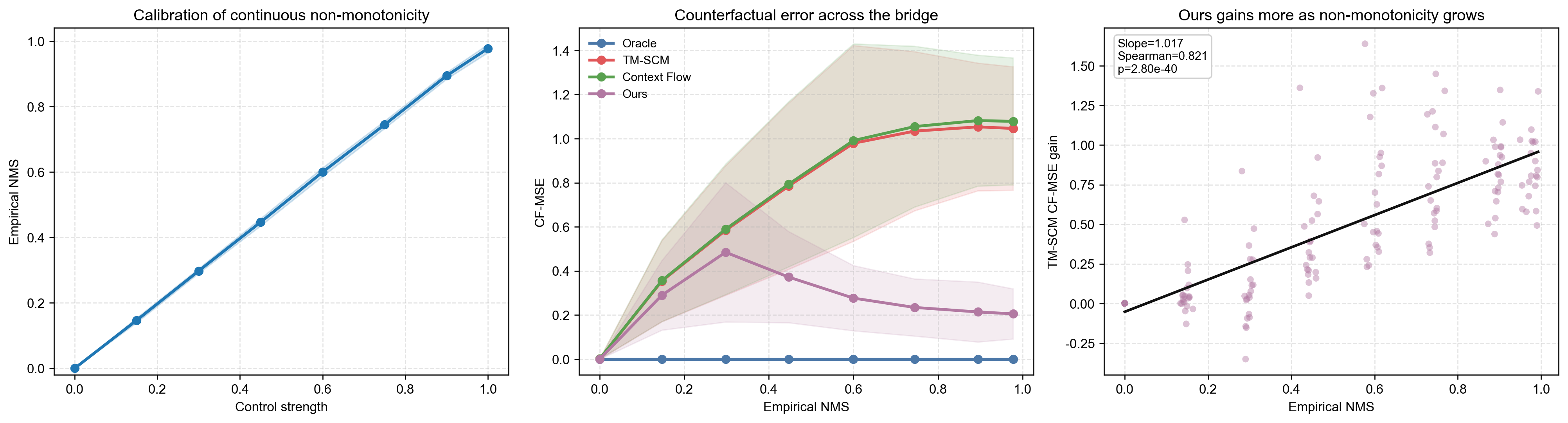}
\caption{Continuous synthetic bridge from globally monotone to strongly non-monotone mechanisms.}
\label{fig:appendix-synth-bridge}
\end{figure}

Figure~\ref{fig:appendix-synth-bridge} turns the synthetic claim into a continuous spectrum rather than a three-point family comparison. The left panel verifies that the control parameter is well calibrated: the realized synthetic non-monotonicity score increases almost linearly from $0$ to about $0.98$. The middle panel then shows the counterfactual consequence of that sweep. Near the monotone end, TM-SCM, contextual flow, and our model are all competitive, which is exactly the regime where a stronger inverse-transport bias should not be expected to create a large advantage. As the mechanism becomes more non-monotone, however, the monotone prior becomes increasingly misspecified: TM-SCM and contextual flow errors rise sharply, while the error of our model eventually decreases after the moderate regime and remains much lower in the strongly non-monotone region. The right panel makes the boundary argument explicit at the per-run level. The gain of our model over TM-SCM grows with the calibrated non-monotonicity score, with fitted slope $1.017$ and Spearman $\rho=0.821$ ($p=2.8\times 10^{-40}$). This is the synthetic analogue of the Push-versus-Door story in the main text: stronger branch-sensitive non-monotonicity makes inverse-transport alignment more valuable.

\begin{figure}[H]
\centering
\includegraphics[width=0.62\linewidth]{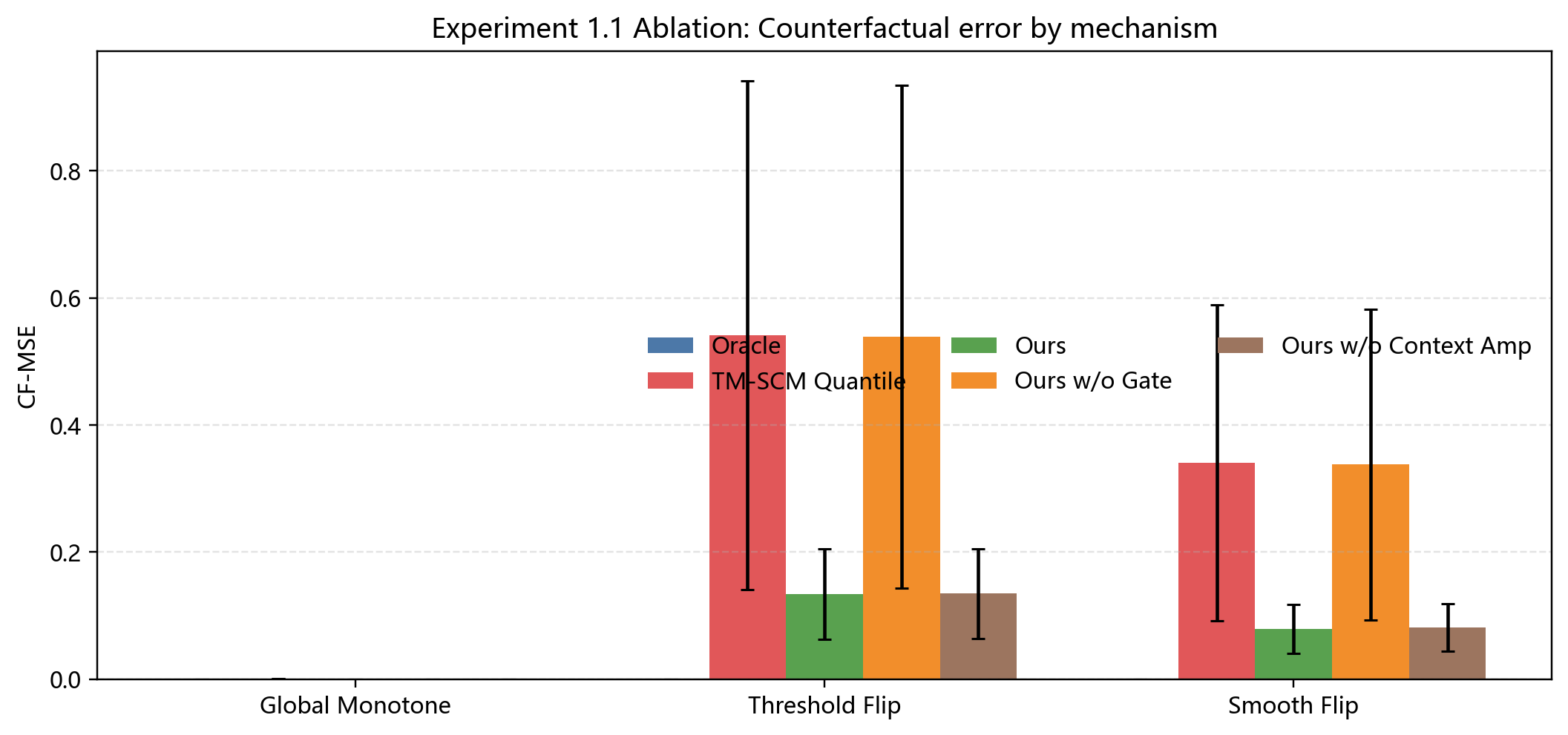}
\caption{Synthetic ablations for Experiment 1.1.}
\label{fig:appendix-synth-ablation}
\end{figure}

Figure~\ref{fig:appendix-synth-ablation} complements the synthetic main table by isolating which parts of the model matter most once the benchmark mechanisms become non-monotone. The main pattern is that removing inversion- or transport-related structure causes the largest counterfactual degradation on the threshold-flip and smooth-flip families, whereas globally monotone mechanisms are much less sensitive. This matters because the synthetic benchmark is the cleanest setting in which the structural claim can be separated from embodied-control noise. The figure therefore supports the interpretation that the proposed inductive bias is not universally beneficial in an unconstrained sense; rather, it is specifically beneficial when the data-generating mechanism actually contains the kind of branch-sensitive non-monotonicity targeted by the theory.

\section{Reproducibility Statement}

All reported results are produced by repository scripts, with raw \texttt{json}, \texttt{md}, and figure artifacts stored under \texttt{code/outputs/}. The discrete synthetic sweep covers $108$ configurations, the continuous synthetic bridge covers $160$ calibrated runs, the formal MuJoCo Push replication is generated by \texttt{run\_multiseed\_4\_3\_push\_mujoco\_balanced.py}, the Door robustness study by \texttt{run\_robustness\_4\_3\_door\_mujoco.py}, the paired physical statistics by \texttt{run\_physical\_significance.py}, and the Door three-seed head-to-head by \texttt{run\_multiseed\_4\_3\_door\_mujoco\_head2head.py}. The repository also includes a numeric consistency audit, \texttt{audit\_physical\_claims.py}, which checks that the physical numbers quoted in the paper match the generated artifacts. Dataset directories contain \texttt{metadata.json} files with seeds, sizes, environment parameters, and sampler settings; full proofs and detailed result tables are deferred to the supplement.

\section{Limitations}

Our guarantees apply to shared-order triangular recursive SCMs and do not directly cover strongly cyclic systems. They also rely on mechanism-wise invertibility, so severe dissipation or many-to-one compression can break factual abduction itself. Empirically, CausalInverter is only a theory-inspired approximation, and the embodied evaluation is deliberately state-based and low-trajectory: we now compare against compact feedforward, autoregressive, recurrent, Transformer, and flow baselines under the same limited supervision, but not yet against image-conditioned world models or deep latent causal baselines trained directly on the physical tasks. Thus the current empirical claim is narrower than a general embodied-world-model claim. Finally, the strongest physical evidence comes from Door; Push remains a formally re-checked boundary case where stronger low-data predictors stay competitive or better.

\end{document}